\documentclass{article}

\usepackage{nnlinkage_preprint,times}
\usepackage{amsmath,amssymb,amsthm}
\usepackage{booktabs}
\usepackage{graphicx}
\usepackage[caption=false,font=normalsize]{subfig}
\usepackage{tikz}
\usepackage{url}
\usepackage{float}
\usepackage{wrapfig}
\usepackage{placeins}
\usetikzlibrary{arrows.meta,calc}

\floatstyle{ruled}
\newfloat{algorithm}{t}{loa}
\floatname{algorithm}{Algorithm}

\usepackage[hidelinks,unicode]{hyperref}
\hypersetup{
  pdftitle={Algorithmically Aligned Neural Agglomerative Tree Construction},
  pdfauthor={Robert R Nerem, Pranav Singh, Cheyenne Ward, Yusu Wang},
  bookmarksnumbered=true
}
\usepackage{cleveref}
\crefname{section}{Sec.}{Secs.}
\Crefname{section}{Sec.}{Secs.}
\crefname{subsection}{Sec.}{Secs.}
\Crefname{subsection}{Sec.}{Secs.}
\crefname{appendix}{App.}{Apps.}
\Crefname{appendix}{App.}{Apps.}
\crefname{equation}{Eq.}{Eqs.}
\Crefname{equation}{Eq.}{Eqs.}
\crefname{figure}{Fig.}{Figs.}
\Crefname{figure}{Fig.}{Figs.}
\crefname{subfigure}{Fig.}{Figs.}
\Crefname{subfigure}{Fig.}{Figs.}
\crefname{table}{Tab.}{Tabs.}
\Crefname{table}{Tab.}{Tabs.}
\crefname{algorithm}{Algorithm}{Algorithms}
\Crefname{algorithm}{Algorithm}{Algorithms}

\newtheorem{theorem}{Theorem}

\newtheorem{lemma}[theorem]{Lemma}
\newtheorem{proposition}[theorem]{Proposition}
\newtheorem{corollary}[theorem]{Corollary}

\crefname{corollary}{Cor.}{Cors.}
\Crefname{corollary}{Cor.}{Cors.}

\crefname{theorem}{Thm.}{Thms.}
\Crefname{theorem}{Thm.}{Thms.}
\crefname{definition}{Def.}{Defs.}
\Crefname{definition}{Def.}{Defs.}
\crefname{lemma}{Lem.}{Lems.}
\Crefname{lemma}{Lem.}{Lems.}
\crefname{proposition}{Prop.}{Props.}
\Crefname{proposition}{Prop.}{Props.}

\newcommand{\R}{\mathbb{R}}

\newcommand{\cC}{\mathcal{C}}
\newcommand{\cL}{\mathcal{L}}

\DeclareRobustCommand{\nnlinkage}{\textup{\textsc{NN-linkage}}}
\title{Algorithmically Aligned Neural \\Agglomerative Tree Construction}

\author{%
\textbf{Robert R Nerem}$^{1}$ \quad
\textbf{Pranav Singh}$^{1}$ \quad
\textbf{Cheyenne Ward}$^{2}$ \quad
\textbf{Yusu Wang}$^{1}$\\[0.5ex]
$^{1}$University of California San Diego\\
$^{2}$University of Washington
}

\begin{document}

\maketitle

\begin{abstract}

Linkage algorithms for hierarchical clustering (HC) are a powerful and efficient
framework for constructing clustering trees, yet it is often unclear which merge rule best suits a given dataset or task. In contrast, neural approaches can
learn from data, but often fail to retain
the efficiency and size generalization of classical algorithms. We introduce \nnlinkage{}, a neural network (NN)
model that can learn task-specific and locally dependent merge rules while retaining the recursive
structure and efficient inference of classical linkage algorithms. 
In particular, our model is algorithmically aligned with the Lance--Williams (LW) recurrence, a parameterized framework for defining a broad, continuous family of linkage rules for agglomerative HC. Classical methods such as single linkage (SL), complete linkage (CL), and average linkage arise as discrete choices within this broader family.
We show that \nnlinkage{} is a universal approximator for continuous linkage functions,
including LW recurrences, and, when paired with a transformer
encoding, can also approximate globally dependent rules such as robust single-linkage. We further
show that \nnlinkage{} can exactly implement any symmetric constant-coefficient LW recurrence across all input sizes.
On the empirical front, we evaluate \nnlinkage{} in real-world applications, clock-tree routing and phylogenetic reconstruction, using both synthetic and real datasets, demonstrating its effectiveness over both classical algorithms and other neural approaches.
By learning merge rules directly from target trees, \nnlinkage{} extends efficient
HC to scientific and engineering objectives not adequately
captured by existing hand-designed linkage rules.
\end{abstract}

\section{Introduction}

Traditional linkage algorithms for hierarchical clustering (HC) are efficient and
interpretable.
However, different linkage functions offer different benefits, so it is often unclear which rule is best
suited to a given dataset. Additionally, these merge rules remain fixed and are therefore
unadaptable to the data or task, especially if we need different local rules depending on different local point configurations. In contrast, neural methods excel at
using data to improve inference, but often struggle with size generalization
and may fail to leverage algorithmic structure for efficient computation. We aim
to develop a neural method for HC that retains the efficiency, recursive
structure, and size generalization of traditional linkage algorithms, while
using data to learn a flexible and (locally and globally) adaptable linkage function from an expressive class.

Another important advantage of supervised neural approaches is that they can address
both explicit and implicit objectives. For explicit objectives, such as
Dasgupta's HC cost, supervision can be provided by solving small instances
optimally, while size generalization can enable good solutions on larger
instances where exact optimization is impractical. Implicit objectives can also
be provided by labels, for example by hand-labeled data, or through phylogenetic trees generated under a
particular evolutionary model. In many engineering domains, such as very-large-scale integration (VLSI),
solutions are often constructed through lengthy pipelines combining
hand-selected heuristics and iterative refinement to optimize multiple, often
poorly specified objectives. By providing these solutions as supervision (labeled target trees), a
neural model may learn to compress this lengthy procedure while optimizing the
same implicit objective.

Neural algorithmic reasoning aims to combine the benefits of algorithms and neural networks (NNs)
by constructing NNs with an inductive bias
toward algorithmic behavior~\citep{VelickovicBlundell2021,He2026NeuralizationValue}. A common technique for introducing this bias is
algorithmic alignment, where NN architectures are designed to
reflect the computational structure of a target algorithm or algorithmic
framework~\citep{Xu2020AlgorithmicAlignment}.
Both empirical and theoretical studies have shown that algorithmic alignment
can enable NNs to generalize to larger input sizes than those seen
during training~\citep{Nerem2026ShortestPaths,Velickovic2022CLRS,Velickovic2020NeuralExecution}.

\paragraph{Our Contribution.} Following this approach, we introduce \nnlinkage{}, a neural model aligned with
the Lance--Williams (LW) recurrence framework for computing linkage functions. Our
model replaces scalar linkage scores with learned merge embeddings and the
LW update with a recurrently applied NN. This design allows
\nnlinkage{} to retain the recursive computation and priority-queue structure of LW
linkage algorithms while learning an expressive merge rule from data.
\nnlinkage{} offers greater flexibility than simply selecting among existing
linkage algorithms. In particular, the model can learn locally adaptive merge
strategies that depend on the clusters being merged or their context in the
input point set. These strategies come from an infinite family of linkage
functions and can be highly complex, making them difficult to specify through
explicit hand-designed rules.
Additionally, our model can maintain the $O(n^2\log n)$ time complexity of linkage functions computed by LW recurrences, by using a priority queue to rank potential merges. This priority queue is also
useful during training, as it allows us to compute a loss more efficiently without requiring enumeration of all pairwise comparisons between potential merges. 

On the theoretical front, we prove that \nnlinkage{} is universal in that, for a fixed input size, it can approximate any continuous linkage function of the two subtrees being merged. If the input is initialized with transformer embeddings, the model can approximate any
score that additionally depends on the entire point set. This global dependence
enabled by the transformer allows \nnlinkage{} to approximate linkage functions
such as robust single linkage (SL), which uses $k$-nearest-neighbor distances.
We also show that the \nnlinkage{} model can exactly realize any symmetric
constant-coefficient LW recurrence across all input sizes (instead of a fixed size).
These results show that \nnlinkage{} can replicate common linkage
functions and approximate general rules that depend on tree structure and
global context.

Empirically, across two tested applications, \nnlinkage{} outperforms classical algorithms and is competitive against other neural approaches. In the first domain, we perform empirical evaluations on
clock-tree routing, specifically the zero-skew tree problem in $L_1$, which can be reduced to HC under the sum of diameters objective. We train \nnlinkage{} with supervision coming from
uniformly sampled optimally solved 20-point instances. The resulting model generalizes to much larger synthetic instances, outperforming complete linkage (CL), the SoTA algorithm for this problem used in chip design, on point clouds with up to 500 points. Applied as a refinement of CL on VLSI benchmarks, it achieves Pareto-optimal performance and state-of-the-art objective values.

We also test \nnlinkage{}'s ability to recover phylogenetic trees from multiple
sequence alignment (MSA) data. Following the experimental setup of
\citet{Zhang2025NeuralNJ}, we use supervised training on instances generated through simulated
evolution~\citep{LyTrong2022AliSim} and evaluate it on both synthetic
and real data. On both datasets, \nnlinkage{} is Pareto optimal, running substantially faster than comparable neural
methods, with a modest reduction in tree accuracy on synthetic data and slightly improved accuracy on real data.

\paragraph{Related Work.}

HC is widely used to identify hierarchical structure in data and yield clusterings at
multiple levels of resolution. In seminal work, \citet{Dasgupta2016} introduced
a cost function on clustering trees based on pairwise similarities, allowing
HC algorithms to be evaluated through a common optimization objective.
\citet{CohenAddad2018} extend this framework to broader HC objectives and derive
approximation guarantees.
For the dual objective, \citet{MoseleyWang2017} showed that average linkage
gives a $1/3$-approximation. Statistical work studies convergence to population cluster trees, including merge distortion~\citep{Eldridge2015MergeDistortion}, axiomatic population HC~\citep{AriasCastroCoda2025AxiomaticHC}, and consistency of robust SL for density cluster trees~\citep{ChaudhuriDasgupta2010}.

Prior work has incorporated learning into HC through learned representations, objectives, and merge rules. JULE jointly learns representations and agglomerative cluster assignments~\citep{Yang2016JULE}, while HypHC optimizes a hyperbolic relaxation of Dasgupta's objective~\citep{Chami2020HypHC}. More closely related to our work, \citet{Balcan2017AlgorithmConfiguration,Balcan2019LearningToLink} study parameterized linkage families and metric mixtures with generalization guarantees, and Exponential Linkage learns a distance function and parameterized linkage rule~\citep{Yadav2019ExponentialLinkage}. These methods learn within prescribed linkage families, whereas \nnlinkage{} learns the linkage function itself through recurrent neural updates. NeuralNJ~\citep{Zhang2025NeuralNJ} also constructs an HC through learned merge rankings, but uses global attention over subtree representations and lacks alignment with the LW recurrence. We compared directly with NeuralNJ in the phylogenetic reconstruction experiment.

\nnlinkage{} also builds on algorithmic alignment in neural algorithmic reasoning~\citep{VelickovicBlundell2021,He2026NeuralizationValue,Xu2020AlgorithmicAlignment}. Alignment with dynamic programming is studied extensively by \citet{Dudzik2022DynamicProgrammers}, with theoretical work establishing conditions for size generalization~\citep{Wittig2026WhichAlgorithms,Nerem2026ShortestPaths}. Empirically, aligned models have learned standard algorithms~\citep{Ibarz2022Generalist}, improved combinatorial optimization through primal-dual structure~\citep{He2025PrimalDual}, and generalized across sizes in online matching~\citep{Hayderi2024MAGNOLIA}. NN-Steiner~\citep{Kahng2024NNSteiner} is another example of algorithmic alignment
with size generalization and, like NN-linkage, replaces computations within a
classical algorithm's recursive structure with neural networks.
\section{Model}

\begin{figure}[tbp]
\centering
\scalebox{0.94}{\definecolor{diagramink}{HTML}{29333D}
\definecolor{diagrammuted}{HTML}{65717C}
\definecolor{diagramrule}{HTML}{D8DEE3}
\definecolor{diagrama}{HTML}{3E719A}
\definecolor{diagramb}{HTML}{378577}
\definecolor{diagramc}{HTML}{89649A}
\definecolor{diagramselected}{HTML}{B87517}

\begin{tikzpicture}[
    x=1cm,y=1cm,
    font=\sffamily\fontsize{9}{11}\selectfont,
    text=diagramink,line cap=round,line join=round,
    flow/.style={-{Stealth[length=5.2pt,width=4.3pt]},draw=diagrammuted,line width=.9pt,shorten >=2pt},
    feedback/.style={flow,draw=diagramselected},
    block/.style={draw=diagramrule,fill=white,line width=.8pt,
      rounded corners=2pt,minimum height=.64cm,minimum width=2.05cm,align=center},
    tree/.style={line width=.8pt},
    point/.style={circle,draw=white,line width=.35pt,minimum size=3.7pt,inner sep=0pt},
    merge/.style={circle,fill=white,line width=.7pt,minimum size=3.5pt,inner sep=0pt},
    root/.style={merge,minimum size=4.3pt},
    candidate/.style={draw=diagrammuted,line width=.8pt,dash pattern=on 0pt off 2.6pt},
    chosen/.style={candidate,draw=diagramselected,line width=1.2pt},
    embedding/.style={fill=white,inner sep=1pt,font=\fontsize{8}{9}\selectfont},
    note/.style={font=\sffamily\fontsize{8}{10}\selectfont,text=diagrammuted},
    tensor/.style={font=\fontsize{8}{10}\selectfont,inner sep=2pt},
    heading/.style={font=\sffamily\bfseries\fontsize{9}{11}\selectfont}
]
\begin{scope}[xshift=9.60cm]

\begin{scope}[shift={(.05,1.94)},scale=.6]
\path[fill=diagrama!8,draw=diagrama!35,line width=.55pt]
  (.18,3.64) .. controls (-.02,4.13) and (.12,4.91) .. (.58,5.25)
  .. controls (1.07,5.63) and (1.78,5.59) .. (2.17,5.15)
  .. controls (2.46,4.86) and (2.52,4.28) .. (2.25,3.86)
  .. controls (1.87,3.23) and (.59,3.22) .. cycle;
\path[fill=diagramb!8,draw=diagramb!35,line width=.55pt]
  (4.73,4.06) .. controls (4.51,4.59) and (4.80,5.34) .. (5.39,5.56)
  .. controls (6,5.83) and (6.57,5.50) .. (6.77,5.03)
  .. controls (7.05,4.47) and (6.99,3.82) .. (6.52,3.48)
  .. controls (5.91,3.05) and (5.02,3.36) .. cycle;
\path[fill=diagramc!8,draw=diagramc!35,line width=.55pt]
  (1.91,1.38) .. controls (1.72,1.89) and (2.35,2.47) .. (3.08,2.62)
  .. controls (3.75,2.83) and (4.72,2.34) .. (5.01,1.76)
  .. controls (5.35,1.12) and (4.75,.54) .. (4.11,.48)
  .. controls (3.19,.31) and (2.15,.61) .. cycle;
\coordinate (Ar) at (1.97,4.30);
\coordinate (Ai) at (1.19,4.85);
\coordinate (Aj) at (.94,3.94);
\coordinate (A1) at (.55,4.80);
\coordinate (A2) at (1.54,5.21);
\coordinate (A3) at (.39,3.75);
\coordinate (A4) at (1.43,3.62);
\coordinate (Br) at (5.05,4.30);
\coordinate (Bi) at (5.70,4.97);
\coordinate (Bj) at (6.06,3.95);
\coordinate (B1) at (5.39,5.28);
\coordinate (B2) at (6.37,5.18);
\coordinate (B3) at (6.63,4.27);
\coordinate (B4) at (5.75,3.61);
\coordinate (Cr) at (3.56,2.24);
\coordinate (Ci) at (3.74,1.53);
\coordinate (Cj) at (2.95,1.17);
\coordinate (C1) at (2.20,1.84);
\coordinate (C2) at (2.38,.96);
\coordinate (C3) at (3.42,.77);
\coordinate (C4) at (4.72,1.03);
\draw[chosen] (Ar) -- node[embedding,text=diagramselected] {$\psi_{A,B}$} (Br);
\draw[candidate] (Ar) -- node[embedding,pos=.53] {$\psi_{A,C}$} (Cr);
\draw[candidate] (Br) -- node[embedding,pos=.53] {$\psi_{B,C}$} (Cr);
\draw[tree,diagrama] (Ar)--(Ai)--(A1) (Ai)--(A2) (Ar)--(Aj)--(A3) (Aj)--(A4);
\draw[tree,diagramb] (Br)--(Bi)--(B1) (Bi)--(B2) (Br)--(Bj)--(B3) (Bj)--(B4);
\draw[tree,diagramc] (Cr)--(C1) (Cr)--(Ci)--(C4) (Ci)--(Cj)--(C2) (Cj)--(C3);
\foreach \p in {A1,A2,A3,A4} \node[point,fill=diagrama] at (\p) {};
\foreach \p in {B1,B2,B3,B4} \node[point,fill=diagramb] at (\p) {};
\foreach \p in {C1,C2,C3,C4} \node[point,fill=diagramc] at (\p) {};
\foreach \p in {Ai,Aj} \node[merge,draw=diagrama] at (\p) {};
\foreach \p in {Bi,Bj} \node[merge,draw=diagramb] at (\p) {};
\foreach \p in {Ci,Cj} \node[merge,draw=diagramc] at (\p) {};
\node[root,draw=diagrama] at (Ar) {};
\node[root,draw=diagramb] at (Br) {};
\node[root,draw=diagramc] at (Cr) {};
\node[text=diagrama,anchor=south] at (.46,5.35) {$A$};
\node[text=diagramb,anchor=south] at (6.80,5.35) {$B$};
\node[text=diagramc,anchor=north] at (2.10,.58) {$C$};
\end{scope}

\node[point,fill=diagrammuted] at (.25,1.50) {};
\node[note,anchor=west] at (.60,1.50) {Input point};
\node[merge,draw=diagrammuted] at (.25,1.05) {};
\node[note,anchor=west] at (.60,1.05) {Internal node};
\draw[candidate] (.05,.60)--(.45,.60);
\node[note,anchor=west] at (.60,.60) {Candidate merge};
\end{scope}

\begin{scope}[xshift=-4.02cm]
\node[block,draw=diagrama!50,fill=diagrama!5] (encoder)
  at (6.05,4.75) {Point encoder};
\node[tensor] (input) at (4.40,4.75) {$X$};
\draw[flow] (input.east)--(encoder.west);
\node[block,draw=diagrama!50,fill=diagrama!5] (init) at (6.05,3.65) {$\mathrm{MLP}_{\mathrm{init}}$};
\draw[flow] (encoder.south)--(init.north);

\node[block,draw=diagramb!55,fill=diagramb!5] (update) at (6.05,1.98) {$\mathrm{MLP}_{\mathrm{update}}$};
\node[note,align=center,text=diagramink] (cache) at (6.05,1.05)
  {Stored embeddings\\$\psi_{A,B},\ \psi_{A,C},\ \psi_{B,C}$};
\draw[flow] (cache.north)--(update.south);

\coordinate (join) at (7.42,2.77);
\draw[flow] (init.east)--(7.42,3.65)--(join);
\draw[flow] (update.east)--(7.42,1.98)--(join);
\fill[diagrammuted] (join) circle (1.2pt);
\node[block,minimum width=1.35cm,minimum height=.88cm] (score)
  at (9.70,2.77) {Score\\$w^\top\psi_{U,V}$};
\draw[flow] (join)--(score.west);
\node[note,align=center,anchor=south] at (8.18,2.96) {Merge\\embedding};
\node[tensor,anchor=north] at (8.18,2.66) {$\psi_{U,V}$};

\path[fill=white,draw=diagramrule,line width=.8pt,rounded corners=2pt]
  (11.35,1.63) rectangle (13.42,4.65);
\node[heading,align=center] at (12.385,4.24) {Priority\\queue $Q$};
\draw[diagramrule,line width=.55pt] (11.35,3.84)--(13.42,3.84);
\path[fill=diagramselected!11] (11.40,3.16) rectangle (13.37,3.75);
\draw[diagramselected,line width=1.7pt] (11.43,3.23)--(11.43,3.68);
\node[note,text=diagramselected,anchor=west] at (11.57,3.46) {1};
\node[text=diagramselected] at (12.48,3.46) {$\{A,B\}$};
\node[note,anchor=west] at (11.57,2.77) {2};
\node at (12.48,2.77) {$\{A,C\}$};
\draw[diagramrule,line width=.4pt] (11.55,2.43)--(13.22,2.43);
\node[note,anchor=west] at (11.57,2.09) {3};
\node at (12.48,2.09) {$\{B,C\}$};
\draw[flow] (score.east)--(11.35,2.77);

\draw[feedback] (12.385,1.63)--(12.385,.25)--(6.05,.25)--(cache.south);
\node[note,text=diagramselected,fill=white,inner sep=2pt] at (9.50,.69)
  {Merge $A,B$; update affected pairs};
\end{scope}
\end{tikzpicture}}
\caption{\nnlinkage{} initializes merge embeddings from encoded points and recursively updates them from stored embeddings. A shared linear scorer orders candidate merges in a priority queue.}
\label{fig:neural_linkage_queue}
\end{figure}

\paragraph{Preliminaries.}

Let $X=(x_1,\ldots,x_n)$ be an input set. A hierarchical clustering (HC) of $X$ is a
full binary rooted tree $\mathcal T$ whose leaves are the elements of $X$. An agglomerative, or bottom-up, algorithm for producing $\mathcal T$ maintains
a partition $\cC_t$ of $X$ at each step $t$. The algorithm initializes
$\cC_0=\{\{x_i\}:i\in[n]\}$, chooses a pair of clusters
$q_t=\{A_t,B_t\}$ in $\cC_t$ at step $t$, and updates
$\cC_{t+1}=(\cC_t\setminus\{A_t,B_t\})\cup\{A_t\cup B_t\}$. Thus each step
replaces two clusters with their union.
This sequence of merges defines a tree. At each step $t$, each cluster
$A\in\cC_t$ is associated with a rooted binary subtree $\tau_A$ whose
leaves are exactly the elements of $A$. Initially, $\tau_{\{x_i\}}$ is the
single leaf $x_i$. When the algorithm merges $A_t$ and $B_t$, it creates a new
tree $\tau_{A_t\cup B_t}=(\tau_{A_t},\tau_{B_t})$, whose root is a new
parent node with unordered children $\tau_{A_t}$ and $\tau_{B_t}$, and assigns
this tree to the cluster $A_t\cup B_t$. After $n-1$ merges, we denote the final tree by $\mathcal T=\tau_X$; it spans the entire dataset $X$ and is the clustering tree output by the algorithm.

Many standard agglomerative HC algorithms are \emph{linkage} algorithms.
At each step, every pair of subtrees $\tau_A,\tau_B$
receives a scalar score $s(\tau_A,\tau_B)$, and the algorithm merges a
highest-scoring pair,
$\arg\max_{A,B\in\cC_t,\ A\ne B} s(\tau_A,\tau_B)$.
Standard examples include single linkage (SL), complete linkage (CL), average
linkage, and Ward linkage.
If the linkage functions admit constant-time computation of scores for pairs involving newly formed clusters, then a priority-queue implementation gives
 $O(n^2\log n)$-time inference.
The classical Lance--Williams (LW) recurrence gives a broad
template for such linkage functions~\citep{LanceWilliams1967}, where 
the score is updated as \footnote{\textcolor{black}{%
For such LW updates, the score $s(\tau_A,\tau_B;\mathcal H_t)$ also depends
on the chronological merge history $\mathcal H_t=(q_0,\ldots,q_{t-1})$, that is,
the order in which past merges occurred. However, for simplicity of
presentation, we leave this history dependence implicit in our notation.}}
\begin{equation}
\label{eq:lance_williams_prelim}
s(\tau_{A\cup B},\tau_C)={}
\alpha_A s(\tau_A,\tau_C)+\alpha_B s(\tau_B,\tau_C)
+\beta s(\tau_A,\tau_B)
+\gamma\left|s(\tau_A,\tau_C)-s(\tau_B,\tau_C)\right|,
\end{equation}
where the coefficients may depend on cluster sizes $|A|,|B|,|C|$.
Standard choices of LW parameters are listed in \cref{app:lw_parameters}.

\paragraph{\nnlinkage{}.}

We introduce \nnlinkage{}, a neural network (NN) model for HC
that is algorithmically aligned with LW-style linkage approaches. In particular, 
\nnlinkage{} replaces the scalar quantities maintained by the LW recurrence
with learned merge embeddings. For each pair of current clusters $A,B$, it
maintains a cached embedding $\psi_{A,B}\in\R^d$. The LW update in
\cref{eq:lance_williams_prelim} is replaced with a shared neural update acting
on $\psi_{A,B}$, $\psi_{A,C}$, and $\psi_{B,C}$.

\nnlinkage{} first computes leaf embeddings with a permutation-equivariant encoder,
$h_i^X=E_\theta(X)_i$, where we write $h_i$ when $X$ is clear. We initialize the
embeddings for merges between singletons using a multilayer perceptron (MLP):
\begin{equation}
\label{eq:init_pair}
\psi_{\{x_i\},\{x_j\}}
=
\mathrm{MLP}_{\mathrm{init}}\!\left([h_i+h_j,\ |h_i-h_j|]\right).
\end{equation}
In the simplest case, leaf embeddings may be computed by a pointwise MLP $E_\theta(X)_i=\mathrm{MLP}_{\mathrm{leaf}}(x_i)$.
Alternatively, $E_\theta$ may be a transformer, allowing embeddings to depend on the entire point set $X$.

If \nnlinkage{} merges clusters $A$ and $B$, the merge
embedding for pairs $\{A\cup B,C\}$ is updated as
\begin{equation}
\label{eq:update}
\color{black}
\psi_{A\cup B,C}
=
\mathrm{MLP}_{\mathrm{update}}\!\left(
[\psi_{A,C}+\psi_{B,C},\
|\psi_{A,C}-\psi_{B,C}|,\
\psi_{A,B}]
\right).
\end{equation}

\begin{wrapalgorithm}{R}{0.50\textwidth}
\caption{\nnlinkage{}}
\label{alg:neural_linkage}
\normalsize
\begin{tabbing}
\quad\=\quad\=\kill
\textbf{Input} $X=(x_1,\ldots,x_n)$\\
$h_i\gets E_\theta(X)_i$\quad $(i\in[n])$\\
$\tau_{\{x_i\}}\gets x_i$\quad $(i\in[n])$\\
$\cC\gets\{\{x_i\}:i\in[n]\}$,\quad $Q\gets\emptyset$\\
\textbf{for} $1\le i<j\le n$ \textbf{do}\\
\> $p\gets\{x_i\}$,\quad $q\gets\{x_j\}$\\
\> $\psi_{p,q}\gets\mathrm{MLP}_{\mathrm{init}}([h_i+h_j,|h_i-h_j|])$\\
\> $Q.\operatorname{push}(\{p,q\},w^\top\psi_{p,q})$\\
\textbf{end for}\\
\textbf{while} $|\cC|>1$ \textbf{do}\\
\> \textbf{repeat}\\
\>\> $\{A,B\}\gets Q.\operatorname{popmax}()$\\
\> \textbf{until} $A,B\in\cC$\\
\> $U\gets A\cup B$,\quad $\tau_U\gets(\tau_A,\tau_B)$\\
\> $\cC\gets(\cC\setminus\{A,B\})\cup\{U\}$\\
\> \textbf{for} $C\in\cC\setminus\{U\}$ \textbf{do}\\
\>\> Compute $\psi_{U,C}$ using \cref{eq:update}.\\
\>\> $Q.\operatorname{push}(\{U,C\},w^\top\psi_{U,C})$\\
\> \textbf{end for}\\
\textbf{end while}\\
\textbf{return} $\mathcal T =\tau_X $
\end{tabbing}
\end{wrapalgorithm}

The map $\mathrm{MLP}_{\mathrm{init}}$ is the initialization MLP and is called
once for every singleton pair. The map $\mathrm{MLP}_{\mathrm{update}}$ is the
recursive update MLP and is called after each merge to compute the new merge
embeddings involving the newly formed cluster. The sum and absolute-difference features make the update MLP invariant to exchanging
the two merged clusters. 

The merge score is then computed as
$s_\theta^X(\tau_A,\tau_B)=w^\top\psi_{A,B}$,
where a higher score means higher merge priority, and the superscript $X$ makes
explicit the global dependence available through the leaf encoder.
We maintain the set of
potential merges in a priority queue ordered by $s_\theta^X$. At each step, the
algorithm removes the highest-scoring candidate whose clusters are still in the
current partition and merges it. This merge creates a new cluster $A\cup B$.
We then compute the merge embedding
$\psi_{A\cup B,C}$ for all other clusters $C$ and insert $\{A\cup B,C\}$ into the
priority queue with score \begin{equation}
    s_\theta^X(\tau_{A\cup B},\tau_C)=w^\top\psi_{A\cup B,C}.
\end{equation}
The algorithm repeats this process until a single tree remains, which occurs
after $n-1$ merges.
Candidates
whose clusters are no longer in the current partition are left in the queue and
discarded when they are popped. \cref{alg:neural_linkage} summarizes
the procedure.

We define $\mathcal{T}_\theta(X)$ to be the tree constructed by
\nnlinkage{} on input point set $X$. We now give two basic
properties our model satisfies, with proofs given in \cref{app:proofs}.

\begin{proposition}[Permutation equivariance]
\label{thm:equivariance}
\nnlinkage{} is permutation equivariant; with equivariant tie-breaking, every leaf
permutation $\pi$ satisfies $\mathcal{T}_\theta(\pi X)=\pi \mathcal{T}_\theta(X)$.
\end{proposition}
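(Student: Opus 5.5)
The plan is to prove, by induction on the merge step $t$, that the run of \cref{alg:neural_linkage} on $\pi X$ is the image under $\pi$ of the run on $X$. Here $\pi$ acts on leaves by $x_i\mapsto x_{\pi^{-1}(i)}$, and we extend it to clusters, subtrees, and unordered pairs of clusters in the obvious way. The invariant at step $t$ has three parts. First, the partition on $\pi X$ is $\pi\cC_t$, with subtrees $\tau'_{\pi A}=\pi\tau_A$. Second, for every pair of current clusters $A,B\in\cC_t$, the cached embeddings satisfy $\psi'_{\pi A,\pi B}=\psi_{A,B}$. Third, the queue on $\pi X$ is the image of the queue on $X$: it holds the same multiset of (pair, score) entries, relabelled by $\pi$. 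Once the invariant holds at $t=n-1$, it gives $\mathcal T_\theta(\pi X)=\tau'_{\pi X}=\pi\tau_X=\pi\mathcal T_\theta(X)$.

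\textbf{Base case.} We use permutation equivariance of the encoder, $E_\theta(\pi X)_{\pi(i)}=E_\theta(X)_i$. This is immediate for a pointwise MLP and is standard for a transformer without positional encodings. So the leaf embedding attached to the point $x_i$ is unchanged when the input is permuted. Next, $\mathrm{MLP}_{\mathrm{init}}$ is applied to $[h_i+h_j,|h_i-h_j|]$, which is symmetric in $i,j$. Hence the embedding of the unordered singleton pair $\{\{x_i\},\{x_j\}\}$ is well defined and depends only on the two points, not on their indices. The initial loop over $1\le i<j\le n$ visits the pairs in a different order on $\pi X$, but it pushes the same multiset of (pair, score) entries up to relabelling, since a priority queue's contents do not depend on insertion order.

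\textbf{Inductive step.} We make precise the assumption of \emph{equivariant tie-breaking}: whenever several entries share the maximal score, the rule picks $\pi$ of the entry it would pick on $X$. The natural formalization is a tie-breaking function of the queue contents, viewed as labelled points, that commutes with relabelling. Under this rule, and by the invariant, the popped entry on $\pi X$ is $\pi$ of the popped entry on $X$. The popped entries that are discarded because a cluster is no longer current correspond in the same way, because membership in $\pi\cC_t$ is transported by $\pi$.

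The merge then produces $\pi(A\cup B)$ with subtree $(\pi\tau_A,\pi\tau_B)=\pi(\tau_A,\tau_B)$. For each remaining cluster $C$, the update in \cref{eq:update} takes as input $\psi_{A,C}+\psi_{B,C}$ and $|\psi_{A,C}-\psi_{B,C}|$, which are symmetric in $A$ and $B$, together with $\psi_{A,B}$. So $\psi'_{\pi(A\cup B),\pi C}=\psi_{A\cup B,C}$, and the pushed scores $w^\top\psi$ agree. Iterating over $C\in\cC\setminus\{U\}$ in a different order again does not change the queue contents.

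\textbf{Main obstacle.} The only delicate point is ties, including ties between stale and live entries, and the dependence on insertion order that a concrete heap implementation might introduce. The argument uses only the multiset abstraction of the queue together with the equivariant tie-breaking hypothesis, so I will state that hypothesis explicitly as above. Without it the claim can fail: with all scores equal, an index-based tie-break would not commute with $\pi$.
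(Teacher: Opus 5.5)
Your proof is correct and follows essentially the same route as the paper's: induction over merge steps, with the base case coming from the equivariance of $E_\theta$ and the symmetric inputs to $\mathrm{MLP}_{\mathrm{init}}$, and the inductive step coming from equivariant tie-breaking together with the invariance of $\mathrm{MLP}_{\mathrm{update}}$ under swapping the merged clusters $A$ and $B$. Your version is more explicit than the paper's: it tracks the queue contents (including stale entries and insertion order) and formalizes the tie-breaking hypothesis, both of which the paper leaves implicit.
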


\begin{proposition}[Time complexity]
\label{lem:running_time}
For fixed network size and embedding dimension, \nnlinkage{} runs in
$O(n^2\log n)$ time if $E_\theta(X)$ can be computed in $O(n^2)$ time.
\end{proposition}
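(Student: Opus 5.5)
We prove \cref{lem:running_time} by charging the cost of each phase of \cref{alg:neural_linkage} separately. The main tools are two facts: every MLP call costs $O(1)$ because network size and embedding dimension $d$ are fixed, and priority-queue operations on a binary heap of size $m$ cost $O(\log m)$. The phases are the encoder, the initialization loop, and the main merge loop. The only nontrivial point is that stale entries are never deleted, so we have to bound the total number of queue entries ever created.

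\emph{Encoder and initialization.} By hypothesis $E_\theta(X)$ takes $O(n^2)$ time. The initialization loop runs over $\binom{n}{2}$ pairs. Each iteration forms $[h_i+h_j,|h_i-h_j|]$ in $O(d)=O(1)$ time, evaluates $\mathrm{MLP}_{\mathrm{init}}$ in $O(1)$ time, computes $w^\top\psi_{p,q}$ in $O(1)$ time, and pushes onto $Q$ in $O(\log n^2)=O(\log n)$ time. This phase therefore costs $O(n^2\log n)$. We store the embeddings $\psi_{A,B}$ in a hash table or a cluster-indexed $n\times n$ array, giving $O(1)$ access. One simple choice is to give each cluster an integer id equal to a representative index and keep an $n\times n$ table of embeddings.

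\emph{Main loop.} There are exactly $n-1$ merges. At the merge at step $t$ the partition has $n-t$ clusters, so the inner \textbf{for} loop performs $n-t-1$ updates. Each update looks up $\psi_{A,C},\psi_{B,C},\psi_{A,B}$ in $O(1)$ time, applies $\mathrm{MLP}_{\mathrm{update}}$ via \cref{eq:update} in $O(1)$ time, and pushes onto $Q$ in $O(\log|Q|)$ time. Summing over $t$ gives $\sum_t (n-t-1)=O(n^2)$ new entries. Counting the initial ones, the total number of entries ever inserted is $N\le\binom{n}{2}+\binom{n-1}{2}=O(n^2)$, so $|Q|=O(n^2)$ at all times and every operation costs $O(\log n)$. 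Updating $\cC$ and forming $\tau_U$ cost $O(1)$ per merge with a union-find or id-relabeling scheme. Membership tests ``$A,B\in\cC$'' cost $O(1)$ by storing each entry with cluster ids and an ``alive'' flag per id.

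\emph{Lazy deletion (main obstacle).} The \textbf{repeat} loop may pop many stale entries in a single step, so we cannot bound it per iteration. Instead we use an amortized argument: each entry is popped at most once, so the total number of pops over the whole run is at most $N=O(n^2)$. Each pop costs $O(\log n)$ plus an $O(1)$ validity check, for a total of $O(n^2\log n)$. Adding up the three phases, $O(n^2)+O(n^2\log n)+O(n^2\log n)=O(n^2\log n)$.
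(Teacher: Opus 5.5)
Your proposal is correct and follows the paper's argument: there are $O(n^2)$ total insertions, each costing one $O(1)$ MLP evaluation; pops are bounded by insertions, which absorbs the lazy deletions; each heap operation costs $O(\log n)$; and the encoder adds its assumed $O(n^2)$. One small caution about your extra bookkeeping, which the paper omits: if a merged cluster reuses a child's representative id, a per-id alive flag no longer detects stale entries such as $\{A,C\}$, so use fresh ids (at most $2n-1$ of them) or version stamps in queue entries---this affects correctness of the implementation, not the time bound.
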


\paragraph{Training Objective.}

Now that we have described \nnlinkage{} and its inference path, we give our training algorithm, which uses pairwise comparisons between candidate merges. Fix a target tree $\mathcal{T}^\star$, and write $\mathcal C(\mathcal{T}^\star)$ for the set of
clusters appearing as nodes of $\mathcal{T}^\star$. At a training step $t$, we assume the
current partition contains only clusters in $\mathcal C(\mathcal{T}^\star)$. We then take
as positive examples exactly the current pairs whose union is another cluster
of the target tree and make every other pair a negative example. Let $P_t$ and
$N_t$ be the positive and negative pairs, respectively, with 
\begin{equation}
\label{eq:positive_negative_sets}
\begin{aligned}
P_t &= \{\{A,B\}: A,B\in\cC_t,\ A\ne B,\ A\cup B\in \mathcal C(\mathcal{T}^\star)\},\\
N_t &= \{\{A,B\}: A,B\in\cC_t,\ A\ne B\}\setminus P_t.
\end{aligned}
\end{equation}
As there are many merge orders that can generate a given tree, treating all
compatible merges as positive avoids imposing an arbitrary ordering on these
merge operations.
\textcolor{black}{For a candidate pair $q=\{A,B\}$ in the current partition,
we abbreviate $s_\theta^X(\tau_A,\tau_B)$ as $s_\theta(q)$.}
For any margin $m>0$, our loss penalizes each pair in which a negative is not
scored at least $m$ below a positive. In particular,
\begin{equation}
\label{eq:hinge_loss}
\cL_t
=
\sum_{p\in P_t}\sum_{q\in N_t}
\bigl[m+s_\theta(q)-s_\theta(p)\bigr]_+.
\end{equation}
The loss supervises the relative ordering of candidate merges without enforcing
any specific value for the scores.
The full loss for a training trajectory is $\cL=\sum_{t=0}^{n-2}\cL_t$.
We take the hinge subgradient to be zero at zero.

\begin{proposition}[Local ranking margins imply tree recovery]
\label{thm:ranking_margin_recovery}
Fix a target tree $\mathcal{T}^\star$. Suppose that at every state reached by merges
that form clusters in $\mathcal C(\mathcal{T}^\star)$, every pair $p\in P_t$, $q\in N_t$ satisfies
$s_\theta(p)\ge s_\theta(q)+m$ for some $m>0$.
Then \cref{alg:neural_linkage} returns $\mathcal{T}^\star$.
\end{proposition}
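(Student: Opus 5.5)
We argue by induction on the step $t$ that the partition $\cC_t$ maintained by \cref{alg:neural_linkage} consists only of clusters in $\mathcal C(\mathcal{T}^\star)$. The base case $t=0$ holds because every singleton $\{x_i\}$ is a leaf of $\mathcal{T}^\star$. For the inductive step, assume $\cC_t\subseteq\mathcal C(\mathcal{T}^\star)$. Then the state was reached by merges forming clusters in $\mathcal C(\mathcal{T}^\star)$, so the margin hypothesis applies to every $p\in P_t$ and $q\in N_t$.

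We first show $P_t\neq\emptyset$ whenever $|\cC_t|>1$. Clusters of a tree are laminar, and $\cC_t$ partitions $X$. Choose a cluster $D\in\mathcal C(\mathcal{T}^\star)$ that is minimal among the nodes of $\mathcal{T}^\star$ not contained in any member of $\cC_t$. Such a $D$ exists because the root $X$ is not contained in any member (since $|\cC_t|>1$). Its two children $D_1,D_2$ are each contained in some member of $\cC_t$. By laminarity, and because $D$ itself is not contained in a member, each such member lies inside $D$. Hence the members of $\cC_t$ inside $D$ cover $D$ and each sits within $D_1$ or $D_2$. Each child $D_i$ is a node of $\mathcal{T}^\star$ contained in some member $E_i\subseteq D$ of $\cC_t$. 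Laminarity with $E_i\subsetneq D$ forces $E_i\subseteq D_i$, so $E_i=D_i$. Thus $\{D_1,D_2\}\in P_t$.

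Next we check that the popped candidate is positive. The priority queue at step $t$ holds, among entries whose clusters both lie in $\cC_t$, exactly one entry per current pair. Its score is $s_\theta(q)$ from the cached embedding $\psi$, which is the quantity the hypothesis constrains; stale entries are discarded on popping. The loop therefore returns a current pair of maximal score. Every positive pair beats every negative pair by at least $m>0$, so with $P_t$ nonempty the maximum is attained only on $P_t$, whatever the tie-breaking. The algorithm thus merges some $\{A,B\}$ with $A\cup B\in\mathcal C(\mathcal{T}^\star)$, which preserves the invariant.

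After $n-1$ merges, every merge created a node of $\mathcal{T}^\star$ out of two clusters of $\mathcal{T}^\star$. We need these to be its two children: if $A\cup B=D$ with $A,B\in\mathcal C(\mathcal{T}^\star)$ disjoint, laminarity forces $\{A,B\}=\{D_1,D_2\}$. Consequently $\tau_D$ equals the subtree of $\mathcal{T}^\star$ at $D$ by induction, and $\tau_X=\mathcal{T}^\star$.

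The main care point is the existence of a positive pair via laminarity, together with the bookkeeping that queue scores match $s_\theta$ on current pairs. Both are routine.
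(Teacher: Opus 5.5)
Your proof is correct and follows the paper's route. The paper also inducts on the invariant $\cC_t\subseteq\mathcal C(\mathcal{T}^\star)$, exhibits a positive sibling pair (your minimal uncovered node $D$ is exactly the cherry the paper gets by contracting current subtrees to leaves), and uses the margin to force the popped pair to be positive; you additionally make explicit the priority-queue bookkeeping and the fact that a merge with $A\cup B=D$ must join the two children of $D$, both of which the paper leaves implicit.
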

The proof is given in \cref{app:proofs}. The proposition considers all states reached by valid merge sequences. In
practice, we do not enumerate this family and instead train along a single trajectory.
For unordered target-tree supervision, the merge chosen during training is
the highest-scoring merge whose union is a cluster in $\mathcal{T}^\star$.

At a state with $k=|\cC_t|$ clusters, the positive pairs are disjoint while negative pairs may overlap; thus $|P_t|\le k/2=O(n)$, and
$|N_t|=O(n^2)$. The loss at this step may therefore contain $O(n^3)$ positive-negative comparisons. The strength of the
hinge-based loss is that, using the priority queue structure, we can ignore all
negatives that already have a margin greater than $m$ with the lowest-scoring
positive example. We can then scan only the high-scoring negatives and restrict
loss and gradient computation to comparisons involving these negatives.

The negatives that can contribute nonzero hinge loss (and its gradient) are
\begin{equation}
  H_t
=
\left\{q\in N_t:
s_\theta(q)>\min_{p\in P_t}s_\theta(p)-m
\right\}.  
\end{equation}
For any $q\notin H_t$ and any $p\in P_t$, we have
$m+s_\theta(q)-s_\theta(p)\le 0$. The corresponding hinge term is therefore
zero, and
also contributes zero gradient. Thus the full loss in \cref{eq:hinge_loss}
has the same value and gradient if $N_t$ is replaced by $H_t$. Because the priority queue orders candidates by score, the
implementation can scan negatives in descending order, stop once it reaches the
threshold defining $H_t$ above, and compute the exact gradient
using only these positive-negative pairs. This pruning does not improve the
cubic worst-case comparison count per step. However, if only a constant number
$K$ of the highest-scoring violating negatives are retained at each step, then
the total number of positive-negative comparisons across all $n-1$ steps is
$O(n^2)$.

\paragraph{Expressivity.}

Now that we have defined our model, we analyze its expressivity. First, we show
that it is universal in its ability to replicate linkage functions. As with many
universality results, this is a fixed-size statement, meaning the number of
parameters needed for approximation may grow with the number of input points. 
On the other hand, we show that we can establish an expressivity guarantee for unbounded input sizes using a model with constant complexity if we restrict ourselves to any symmetric constant-coefficient LW recurrence. Proofs of the expressivity results are given in \cref{app:proofs}.
\begin{theorem}[Fixed-size linkage universality]
\label{thm:contextual_linkage_universality}
Fix an input size $N$ and a compact domain
$\mathcal K\subseteq\R^p$, with $x_i\in\mathcal K$ for every input point.
With pointwise MLP leaf
embeddings $h_i$, the score \textcolor{black}{$s_\theta^X(\tau_A,\tau_B)$} can uniformly approximate any continuous linkage score $s(\tau_A,\tau_B)$ that depends on the
two candidate subtrees and is invariant to tree isomorphism and swapping $A$
and $B$. If $E_\theta$ is a transformer,
the same holds for continuous scores $s^X(\tau_A,\tau_B)$ that equivariantly depend on
the full input $X$. 
\end{theorem}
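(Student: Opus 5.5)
The strategy is to show that, for fixed $N$ and a compact $\mathcal K$, the cached embedding $\psi_{A,B}$ can be made to carry an (approximately) injective encoding of the unordered pair of subtrees $\{\tau_A,\tau_B\}$. A final readout then applies the target score. Concretely, we choose embeddings so that every reachable $\psi_{A,B}$ lies in a compact set and stores a continuous, swap-invariant, isomorphism-invariant \emph{tree code} of $\tau_A$ and $\tau_B$. The score $s$ then factors as a continuous function $g$ of this code. Because the scorer $w^\top\psi$ is linear, we reserve one coordinate of $\psi$ for the score itself. The last MLP layer, in both $\mathrm{MLP}_{\mathrm{init}}$ and $\mathrm{MLP}_{\mathrm{update}}$, approximates $g\circ(\text{code})$ and writes the result into that coordinate.

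\textbf{Step 1: an exact continuous invariant encoding.} Define a continuous encoding $\Phi(\tau)\in\R^D$ of a subtree with at most $N$ leaves, with $D$ depending on $N$. For a leaf, $\Phi(x)=\iota(x)$ for an embedding $\iota$ of $\mathcal K$. For an internal node, use a symmetric injective combination
\[
\Phi((\tau_1,\tau_2))=\rho\bigl(\Phi(\tau_1)+\Phi(\tau_2),\,|\Phi(\tau_1)-\Phi(\tau_2)|\bigr).
\]
The pair $(u+v,|u-v|)$ determines $\{u,v\}$ only coordinatewise, so this is not yet injective on unordered pairs. To fix this, append power-sum or multiset features: use $(u+v,\,u\odot v,\dots)$ or symmetric polynomials of a random linear projection, which determine $\{u,v\}$. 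These are polynomials, hence expressible as continuous functions of $(u+v,|u-v|)$ after a suitable lift. Likewise, the pair code $\Psi(\tau_A,\tau_B)$ is a symmetric injective function of $(\Phi(\tau_A),\Phi(\tau_B))$. The key requirement is a \emph{recursion}. The update sees only $\psi_{A,C},\psi_{B,C},\psi_{A,B}$, so $\Psi(\tau_{A\cup B},\tau_C)$ must be a continuous symmetric function of these three. This holds because $\psi_{A,B}$ determines $\{\Phi(\tau_A),\Phi(\tau_B)\}$ and hence $\Phi(\tau_{A\cup B})$, while $\psi_{A,C}$ and $\psi_{B,C}$ determine $\Phi(\tau_C)$. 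To make this continuous and single-valued, store the individual codes in a symmetric way within $\psi$, namely as a sum, a product, and a copy of $\Phi(\tau_C)$ recoverable as the common element. Because the domain is compact and the number of trees with $\le N$ leaves is finite, the set of reachable codes is compact. The target $s$, viewed as a continuous function on the finite disjoint union of configuration spaces modulo symmetries, therefore factors continuously through the code.

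\textbf{Step 2: approximation with error control.} Replace each exact continuous map ($\iota$, the init map, the update map, and $g$) by MLPs using the universal approximation theorem on compact sets. The main obstacle is error accumulation through up to $N-1$ recursive updates, since the approximate codes drift off the exact ones. This is handled by choosing tolerances backward. Each exact map is uniformly continuous on a compact neighbourhood of the reachable set, and the depth $N$ is finite, so a standard induction applies: choose $\varepsilon_{N}$ for the readout, and then choose each $\varepsilon_{k}$ small enough that the composed perturbation stays within the modulus of continuity of the next stage. Separately, the readout coordinate is recomputed at each step from the approximate code, so errors in it do not propagate.

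\textbf{Step 3: the transformer case.} With a transformer encoder, use universality of transformers for continuous permutation-equivariant maps on compact domains. This lets $h_i=E_\theta(X)_i$ approximate $(\iota(x_i),\,\Gamma(X))$, where $\Gamma$ is an injective permutation-invariant encoding of the multiset $X$. Then $\Gamma(X)$ is carried unchanged through every code, with init and update copying it, and $g$ becomes a continuous function of $(\text{code},\Gamma(X))$. Equivariance of $s^X$ guarantees that it is well defined on this quotient. The same error propagation argument as in Step 2 then concludes the proof.
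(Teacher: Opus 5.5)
Your outline matches the paper's proof in its overall shape: an injective, swap- and isomorphism-invariant code carried in $\psi_{A,B}$, a reserved coordinate read by $w$, finitely many uniformly continuous approximations, and a transformer appending an injective multiset code $\Gamma(X)$ to every leaf. However, the step the paper identifies as the real difficulty is not established. That step is showing that $\mathrm{MLP}_{\mathrm{update}}$ can output the code of $\{\tau_{A\cup B},\tau_C\}$.

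The update never sees $\psi_{A,C}$ and $\psi_{B,C}$ separately, only $\psi_{A,C}+\psi_{B,C}$ and $|\psi_{A,C}-\psi_{B,C}|$. By your own remark, these fix the pair only coordinatewise. So your justification that ``$\psi_{A,C}$ and $\psi_{B,C}$ determine $\Phi(\tau_C)$ as the common element'' is not something the network can use. For a generic injective symmetric pair code, the sum of two codes need not retain the needed information at all.

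Even with separate access, the common element of $\{\Phi(\tau_A),\Phi(\tau_C)\}$ and $\{\Phi(\tau_B),\Phi(\tau_C)\}$ is not single-valued or continuous near $\Phi(\tau_A)=\Phi(\tau_B)$. Coincident input points are allowed in $\mathcal K^N$, so this case occurs, and such a map cannot feed a uniform approximation argument unless $\psi_{A,B}$ is also used. The root issue is that you model the recursion on per-cluster codes combined through $(u+v,|u-v|)$. The model never does that; it only combines pair embeddings.

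The missing idea is to make the stored pair code \emph{additive} in lifted child codes. In the paper, $\psi_{A,B}$ approximates $z_{A,B}=\bigl(\text{height},\ \varphi(z(\tau_A))+\varphi(z(\tau_B))\bigr)$, where $\varphi(u)$ lists every coordinate $u_i$ and every monomial $u_iu_j$. The sum then yields $a=u+v$ and $Q=uu^\top+vv^\top$, and $2Q-aa^\top=(u-v)(u-v)^\top$ recovers $\{u,v\}$.

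With this choice, the cancellation $\tfrac12\bigl(z^{\mathrm{sum}}_{A,C}+z^{\mathrm{sum}}_{B,C}-z^{\mathrm{sum}}_{A,B}\bigr)=\varphi(z(\tau_C))$ is \emph{linear} in the available inputs. Its linear coordinates return $z(\tau_C)$, and $z_{A,B}$ is itself $z(\tau_{A\cup B})$. Hence $z_{A\cup B,C}$ is an explicit continuous function of $\psi_{A,C}+\psi_{B,C}$ and $\psi_{A,B}$; the absolute-difference input is not even needed.

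Two smaller points:
\begin{enumerate}
\item Your candidate features $(u+v,u\odot v)$ are not injective on pairs of vectors: $\{(1,1),(0,0)\}$ and $\{(1,0),(0,1)\}$ agree on them. You need the cross terms $u_iu_j+v_iv_j$. The same example shows these are not functions of $(u+v,|u-v|)$, so the lift must be applied to each vector \emph{before} summing, e.g.\ by having the leaf MLP output $\varphi$ of the leaf code.
\item Factoring $s$ through the code on the disjoint union of tree shapes requires something like the paper's height coordinate. Without it, codes of trees of different heights (in particular leaves versus internal nodes) could collide.
\end{enumerate}
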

The primary challenge in proving this result is that \nnlinkage{} does not receive candidate subtrees directly, but instead observes them implicitly through the recurrently computed merge embeddings. We proceed by showing $\mathrm{MLP}_{\mathrm{update}}$  can approximate an injective DeepSets-style encoding of the two child subtrees at every merge. Induction over the merge steps then establishes that these embeddings suffice to approximate any continuous function of the candidate subtrees.
\begin{theorem}[Size-invariant LW]
\label{thm:lw}
Given exact singleton scores, every constant-coefficient LW recurrence with $\alpha_A=\alpha_B$
can be realized \textbf{across all input sizes} exactly by \nnlinkage{}.
\end{theorem}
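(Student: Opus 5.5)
The plan is to construct explicit weights for $\mathrm{MLP}_{\mathrm{update}}$ and the scorer $w$, with ReLU activations, that carry the LW scalar exactly inside a small fixed-dimensional embedding. Since the weights do not depend on $n$, one network will serve every input size. Write the common coefficient as $\alpha:=\alpha_A=\alpha_B$. The recurrence \cref{eq:lance_williams_prelim} then reads
\begin{equation*}
s(\tau_{A\cup B},\tau_C)=\tfrac{\alpha}{2}\cdot 2\bigl(s(\tau_A,\tau_C)+s(\tau_B,\tau_C)\bigr)\cdot\tfrac12+\beta\, s(\tau_A,\tau_B)+\gamma\bigl|s(\tau_A,\tau_C)-s(\tau_B,\tau_C)\bigr|,
\end{equation*}
or more simply $\alpha(a+b)+\beta c+\gamma|a-b|$ with $a=s(\tau_A,\tau_C)$, $b=s(\tau_B,\tau_C)$, $c=s(\tau_A,\tau_B)$. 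The update input in \cref{eq:update} supplies exactly the features $\psi_{A,C}+\psi_{B,C}$, $|\psi_{A,C}-\psi_{B,C}|$, and $\psi_{A,B}$. So if every merge embedding encodes its score linearly, the new score is a \emph{linear} function of the update MLP's input. The symmetry hypothesis $\alpha_A=\alpha_B$ is what makes this work: it lets the recurrence be written in terms of $a+b$ and $|a-b|$. Unequal coefficients would need $a-b$ with a sign, which the symmetric input features cannot supply.

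The invariant, proved by induction on the merge step, is that every stored embedding has the form $\psi_{U,V}=(s(\tau_U,\tau_V)^+,\ s(\tau_U,\tau_V)^-)\in\R^2$, so that $w=(1,-1)$ recovers the exact score. This two-coordinate encoding is needed because ReLU output is nonnegative, while LW scores (for example negated distances, so that argmax merges the closest pair) can have either sign.

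1. \emph{Base case.} By the hypothesis of exact singleton scores, we may assume $\mathrm{MLP}_{\mathrm{init}}$ (or the supplied initialization) produces $(s^+,s^-)$ for each singleton pair.
2. \emph{Forming the inputs.} From the update input, $a+b$ is the linear combination $(\psi_{A,C}+\psi_{B,C})_1-(\psi_{A,C}+\psi_{B,C})_2$, and $c$ is obtained the same way from $\psi_{A,B}$.
3. \emph{The absolute difference.} The main subtlety is recovering $|a-b|$ from $|\psi_{A,C}-\psi_{B,C}|$ taken coordinatewise. Write $u=(a^+-b^+,\ a^--b^-)$. If $a,b$ have the same sign, then $|a-b|=|u_1|+|u_2|$, since one coordinate is zero. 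If the signs differ, say $a\ge0>b$, then $u=(a,-|b|)$, so $|u_1|+|u_2|=a+|b|=|a-b|$. Hence $|a-b|=|\psi_{A,C}-\psi_{B,C}|_1+|\psi_{A,C}-\psi_{B,C}|_2$ in every case, which is again linear in the input.
4. \emph{The update network.} Let $L$ denote the resulting linear map, so that $L(\text{input})=s(\tau_{A\cup B},\tau_C)$. Take $\mathrm{MLP}_{\mathrm{update}}$ to have a single hidden ReLU layer computing $(\mathrm{ReLU}(L),\mathrm{ReLU}(-L))$, which is exactly the encoding $(s^+,s^-)$. This reestablishes the invariant.

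Because the same scorer $w$ is used throughout, the priority-queue ordering coincides exactly with the LW linkage at every step for any $n$. Assuming identical tie-breaking, \cref{alg:neural_linkage} then produces the same tree. I expect the absolute-value bookkeeping in step 3 to be the main obstacle, since it is the only place exactness could fail. The fallback, if a different encoding is preferred, is to shift all scores so they are nonnegative. That shift is problematic because $\alpha$, $\beta$, and $\gamma$ do not generally preserve an affine offset, which is why the plan uses the positive/negative-part encoding.
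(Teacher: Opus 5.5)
Your proof is correct, and its core is the same as the paper's. Store the LW score in the merge embedding, observe that the update inputs are exactly the symmetric LW features $a+b$, $|a-b|$, $c$, and note that a ReLU network with $n$-independent weights computes the resulting linear map exactly.

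The difference is your two-coordinate encoding $(s^+,s^-)$, which the paper does not need. Its merge MLPs have a ReLU hidden layer followed by a \emph{linear} output layer (see the model configuration in the experiment-details appendix). So the paper simply sets $\psi_{A,B}=s(\tau_A,\tau_B)\in\R$ with $w=1$, and the MLP outputs $L$ exactly, e.g.\ as $\mathrm{ReLU}(L)-\mathrm{ReLU}(-L)$. In that one-dimensional setting $|\psi_{A,C}-\psi_{B,C}|$ is already $|a-b|$, so your step 3, which you flag as the main obstacle, disappears entirely.

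Your route is still valid: the identity $|a^+-b^+|+|a^--b^-|=|a-b|$ holds in every case, as you check. It buys robustness to architectures that apply a ReLU to the output, at the cost of doubling the embedding dimension. However, your claim that the split is ``needed because ReLU output is nonnegative'' does not apply to the paper's architecture.

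One small slip: in your first display, $\tfrac{\alpha}{2}\cdot 2\cdot\tfrac12=\tfrac{\alpha}{2}$, not $\alpha$. The ``more simply'' formula $\alpha(a+b)+\beta c+\gamma|a-b|$ that follows is correct and is the one you actually use.
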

This result follows from \nnlinkage{}’s algorithmic alignment with LW updates.  Each instance of $\mathrm{MLP}_{\mathrm{update}}$  accesses precisely the three merge embeddings required by the recurrence, and can compute any linear function of these merges exactly. This alignment allows a fixed-size local update to reproduce the classical algorithm exactly, regardless of input size, rather than simply approximate its behavior at a fixed size.

 Dependence on the candidate trees is an important feature of our model. While most linkage rules define the score of a candidate merge only as a function of the
two clusters $A$ and $B$, some common linkage functions allow the score to depend on the current subtrees
$\tau_A$ and $\tau_B$. For example, the weighted pair-group method with
arithmetic mean (WPGMA) updates scores by averaging the two child
scores (no weighting by cluster size), so two clusters with the same
leaf set can yield different scores if they were produced by different
merge histories. Median linkage, which can be represented with constant coefficients, is similar as the cluster representative is the
midpoint of its two child representatives, so it can also depend on the order in
which the leaves were merged. 

Additionally, robust single linkage (robust SL) \citep{ChaudhuriDasgupta2010} demonstrates the power of scores $s^X(\tau_A,\tau_B)$ that depend on the entire input $X$, rather than only the candidate subtrees. Its density-aware construction yields a statistically consistent estimator of the underlying density cluster tree \citep{ChaudhuriDasgupta2010}. In robust SL, each leaf pair is assigned an edge weight $\lambda_{ij}=\max\{r_k(x_i),r_k(x_j),|x_i-x_j|\}$, where $r_k(x_i)$ is the distance from $x_i$ to its $k$th nearest neighbor. Standard SL is then applied to these edge weights. A point in a sparse region has a larger nearest-neighbor radius, so it will not be merged until late, reducing the effect of outliers. The $k$-nearest-neighbor ($k$-NN) radius depends on $X$, so a pointwise MLP encoder cannot determine $r_k(x_i)$ from $x_i$ alone. Transformer leaf embeddings, however, make this global information available before the agglomerative updates begin, allowing \nnlinkage{} to express robust SL.

\begin{corollary}[Robust SL]
\label{cor:robust-sl}
For fixed $N$ and $1\le k<N$, transformer-based \nnlinkage{} can uniformly approximate robust SL linkage scores on compact input domains.
\end{corollary}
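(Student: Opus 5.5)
The plan is to reduce the corollary to the transformer case of \cref{thm:contextual_linkage_universality}. That requires showing that robust SL scores are a continuous linkage score $s^X(\tau_A,\tau_B)$ on the relevant compact domain, and that they depend equivariantly on $X$ and are invariant to tree isomorphism and to swapping $A$ and $B$.

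Since \nnlinkage{} ranks by \emph{higher} score, we take the robust SL score to be
\[
s^X(\tau_A,\tau_B)=-\min_{x_i\in A,\,x_j\in B}\lambda_{ij},
\qquad
\lambda_{ij}=\max\{r_k(x_i),r_k(x_j),|x_i-x_j|\}.
\]
Merging the highest-scoring pair under this score is exactly SL on the edge weights $\lambda_{ij}$. This score depends only on the leaf sets of the two subtrees, so it is automatically invariant to isomorphism and to merge history. It is also symmetric in $A$ and $B$, because $\lambda_{ij}=\lambda_{ji}$.

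Equivariance holds because $r_k(x_i)$ is defined from the multiset $X$, so permuting $X$ permutes the radii accordingly.

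For continuity, we work on the compact set $\mathcal K^N$ of inputs.
\begin{itemize}
\item $r_k(x_i)$ is the $k$th order statistic of the distances $\{|x_i-x_j|\}_{j\ne i}$. Order statistics are continuous, and in fact $1$-Lipschitz in the sup norm of their arguments, so $r_k(x_i)$ is continuous in $X$. The assumption $1\le k<N$ guarantees this is well defined.
\item Each $\lambda_{ij}$ is a maximum of continuous functions, hence continuous.
\item The score is a minimum over the finitely many index pairs determined by the combinatorial data $(A,B)$, hence continuous.
\end{itemize}

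The step I expect to need the most care is matching this to the precise domain of \cref{thm:contextual_linkage_universality}. Subtrees are discrete combinatorial objects labeling which leaves lie in $\tau_A$ and $\tau_B$, while the continuous variables are the leaf coordinates. Continuity is therefore needed only in $X$, separately for each of the finitely many subtree pairs on $N$ leaves, and that is the form handled by the theorem's fixed-$N$ setting. Applying the theorem with this $s^X$ gives, for every $\varepsilon>0$, parameters with $|s_\theta^X(\tau_A,\tau_B)-s^X(\tau_A,\tau_B)|<\varepsilon$ uniformly over $\mathcal K^N$ and over all candidate subtree pairs reachable in a run.

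Optionally, one can add that whenever the robust SL merges have a score gap exceeding $2\varepsilon$, the approximating model performs the same merge.
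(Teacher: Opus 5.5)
Your proposal is correct and follows essentially the same route as the paper. Both arguments write robust SL as $s^X(\tau_A,\tau_B)=-\min_{i\in A,j\in B}\lambda_{ij}$, get continuity (including at ties) from continuity of the $k$th order statistic, check the permutation, isomorphism, and swap invariances, and then invoke the transformer case of \cref{thm:contextual_linkage_universality}; your Lipschitz remark and your note that continuity is needed only in $X$ for each of the finitely many subtree pairs just make explicit what the paper leaves implicit.
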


\section{Experiments}

We perform experiments in two different real-world domains to demonstrate the effectiveness and versatility of our model. In both experiments, we use a hinge margin $m=0.5$. Each model is trained on a single NVIDIA RTX A6000 GPU. 
We first begin with a simple synthetic dataset to show the utility of learned rules over explicit algorithms. 

\paragraph{Rings and Disks.}

We evaluate on a synthetic dataset inspired by the rings-and-disks dataset of \citet{Balcan2019LearningToLink}, where the goal is to recover four clusters corresponding to two concentric rings and two disks. This construction combines two geometries that favor different linkage rules. 
In the ring motif, SL is favored as it connects nearby points along each ring before connecting points between rings. In the disk motif, CL is favored as it merges compact groups within each disk before connecting the two disks.

\citet{Balcan2019LearningToLink} show that a learned convex combination of SL and CL can balance these objectives.
We construct a more challenging variant of this dataset by randomizing the placement of the ring and disk pairs and reducing the separation within each pair. 
On this dataset, no choice of a fixed mixing parameter $\alpha$ in the convex combination of SL and CL achieves perfect recovery of all four clusters. Evaluation uses the best four-subtree pruning of the inferred HC tree.
The challenge is therefore not simply to identify a good linkage rule for the
dataset, but to choose different merge behavior in different regions of the
same instance.

We train \nnlinkage{} by using the desired clustering tree for each rings-and-disks instance as supervision.
On 100 held-out 120-point instances with independently generated motif placements,
\nnlinkage{} correctly classifies 99.60\% of points.
\begin{wrapfigure}{r}{0.5\textwidth}
\centering
\includegraphics[width=\linewidth]{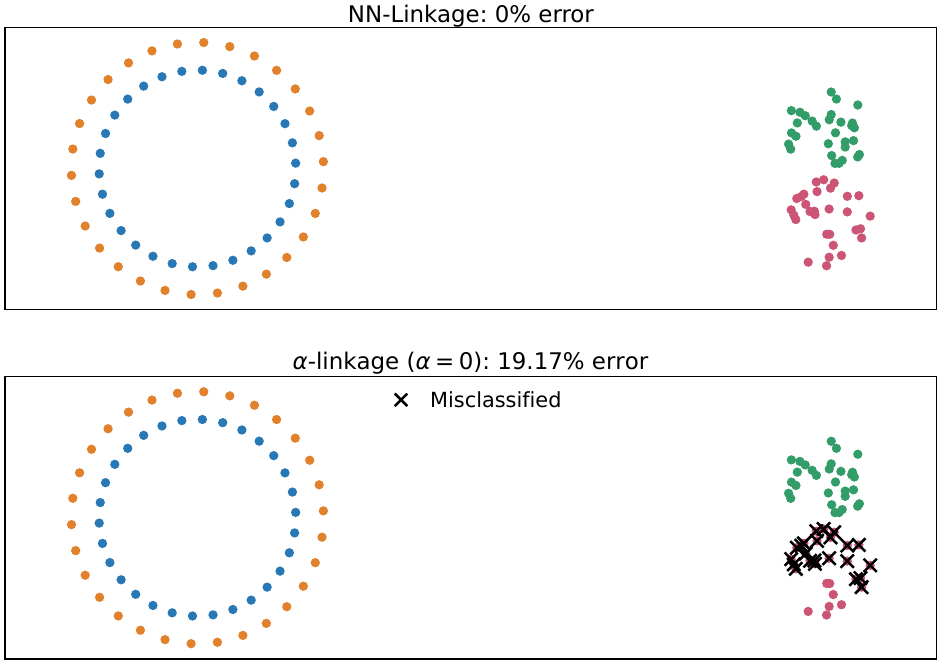}
\caption{A single Rings-and-disks test instance.  Colors indicate ground-truth clusters; crosses mark misclassified points.}
\label{fig:rings-disks}
\end{wrapfigure}
In contrast, the $\alpha$-linkage method of \citet{Balcan2019LearningToLink}
achieves 85.18\% accuracy.
We find that, indeed, \nnlinkage{} appears to have learned to automatically apply an SL-like strategy when the local points form the ring motif (for the ring motif, on average our model chooses a merge with minimum intercluster distance almost identical to the choice made by SL merges), while the model applies a CL-like strategy when locally observing the disk motif. Note these merge choices occur within the same HC tree, but with different rules applied depending on the local point configurations. Such rules in general would have been challenging to handcraft and encode in an explicit algorithm.
An example instance is given in \cref{fig:rings-disks}. These results show that \nnlinkage{} goes beyond applying a fixed linear combination of classical merge rules, and can instead employ a learned, {\bf locally dependent} merge rule. Full results are given in \cref{app:rings-disks}.

\paragraph{Clock Tree Routing.}

We evaluate \nnlinkage{} on zero-skew clock-tree routing, which seeks a minimum-length rooted Steiner tree in the $L_1$ plane with equal-length root-to-leaf paths \citep{Chao1992DME}. For many years, the standard practical method for this problem was Greedy-DME \citep{edahiro1993clustering,edahiro1994efficient}, itself an agglomerative algorithm. More recently, \citet{ChenYoung2019DimSum} showed a reduction to hierarchical clustering under the $L_1$ sum-of-diameters cost $
\sum_{A\in\mathcal C(\mathcal T)}\operatorname{diam}(A)
$, 
where \(\mathcal C(\mathcal T)\) contains the leaf sets of all nodes in the full clustering tree \(\mathcal T\) and
$\operatorname{diam}(A)=\max_{x,y\in A}\|x-y\|_1$.
They further show CL is a constant-factor approximation algorithm for this problem, and demonstrate state-of-the-art performance with CL,  outperforming greedy-DME. Thus, despite its simplicity, CL is a strong baseline following a long-standing line of work. We show \nnlinkage{} can learn a better agglomerative merge rule for optimizing this sum-of-diameters objective.

For training, each instance contains 20 points uniformly sampled from $[0,1]^2$,
and their optimal clustering tree computed using the exact dynamic program from
DimSum~\citep{ChenYoung2019DimSum}. We train \nnlinkage{} on 500,000 such point sets. We use \cref{eq:hinge_loss} restricted to the highest-scoring positive merge, and consider only the top 50 margin comparisons, thereby reducing time to compute the loss. We also provide additional features to $\mathrm{MLP}_{\mathrm{update}}$, details of which are given in \cref{app:experiment_details}.

\begin{figure}[t]
\centering
\subfloat[Uniform-point size generalization.\label{fig:uniform-sizegen}]{%
\includegraphics[width=0.49\linewidth,trim={1.970bp 2.006bp 1.435bp 1.435bp},clip]{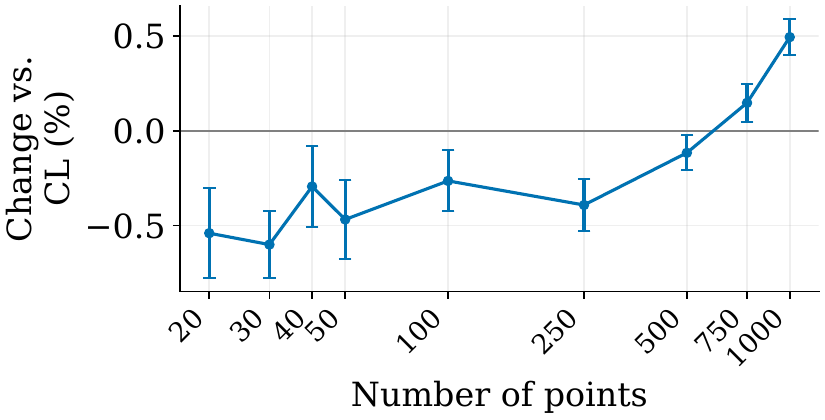}}\hfill
\subfloat[VLSI benchmarks.\label{fig:vlsi-pareto}]{%
\includegraphics[width=0.49\linewidth,trim={1.970bp 2.834bp 1.435bp 1.435bp},clip]{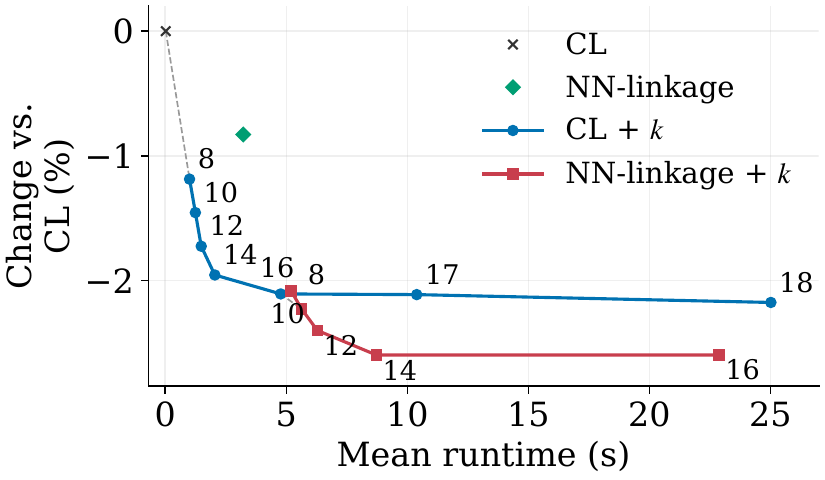}}
\caption{Clock-tree results. Mean percentage change in the sum-of-diameters
objective relative to raw CL; negative values indicate improvement.
(a) Means over 50 held-out point sets per size, with error bars showing one
standard error. (b) Mean objective change and runtime over 20 VLSI instances;
point labels indicate the refinement parameter $k$.}
\label{fig:clock-tree-results}
\end{figure}

We evaluate this \nnlinkage{} model on uniformly sampled point sets with up to $n=1000$ points, despite training only on $n=20$. As shown in \cref{fig:uniform-sizegen}, \nnlinkage{} improves over CL on average through $n=500$, demonstrating substantial size generalization. Detailed results are given in \cref{app:experiment_details}.
In \cref{app:complete-linkage-counterexample}, we give a four-point instance where CL incurs $5/4$ of the optimal cost, while trained \nnlinkage{} recovers the optimum. This gives a concrete example of how \nnlinkage{} has learned to outperform traditional algorithms.

We next test our trained model on 20 instances from the BST-DME and ISPD benchmark collections, listed in \cref{tab:vlsi-instances}. Following~\citet{ChenYoung2019DimSum}, we compare against CL and CL with refinement. The refinement algorithm repeatedly selects $k$ disjoint subtrees and uses the exact dynamic
program to compute the optimal clustering over these $k$ trees. Increasing $k$ allows larger changes but makes
each refinement exponentially more expensive.
As our model has variable performance on these benchmark instances due to distribution shift in size and point set structure, we apply our model as a post-processing optimization to CL.
Specifically, we start from the raw CL tree and recursively propose \nnlinkage{}-generated
replacements for its subtrees, visiting the root and up to six child levels.
As our model offers better performance for point sets of size less than 750, we run \nnlinkage{} on subtrees of size 16 to 750, and replace the CL-generated subtree with the model-generated subtree when it is superior.
This improves the objective by 0.829\% on average relative to raw CL with a mean runtime
of 3.219\,s, compared with 0.018\,s for raw CL. 

\label{sec:vlsi-pareto}
Finally, \cref{fig:vlsi-pareto} compares CL
plus local $k$-refinement against the recursive model with local
$k$-refinement. 
Refined model runs start from CL+$k$, apply the same $k$-refinement to each
model replacement candidate, and accept only objective improvements. \nnlinkage{} extends the Pareto frontier, particularly for higher refinement levels.
For example, compared to CL with $k=17$, the model with $k=14$ achieves an additional
0.485-percentage-point mean objective reduction relative to raw CL, while
taking 16\% less time.
\cref{tab:vlsi-pareto} in
\cref{app:experiment_details} gives detailed results.

\paragraph{Phylogenetic Reconstruction.}
\label{sec:phylogenetics}
\begin{wrapfigure}{R}{0.5\textwidth}
\centering
\includegraphics[width=\linewidth,trim={19.754bp 7.100bp 12.258bp 3.230bp},clip]{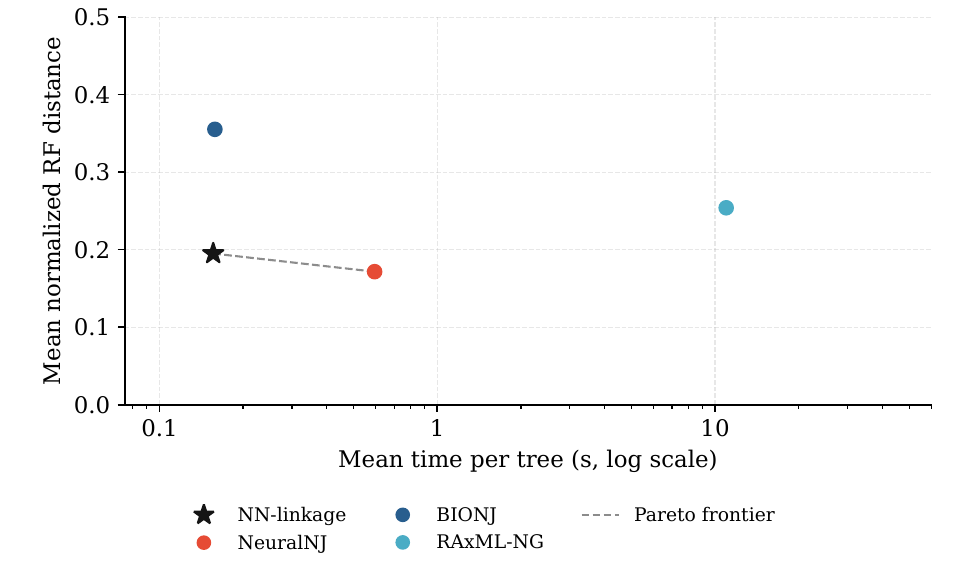}
\caption{Synthetic data comparison. Mean normalized RF distance and runtime
over 1,152 instances, equally weighting the nine size--length settings.
Dashed lines join nondominated points. Lower RF and runtime are better.}
\label{fig:phylo-synthetic-pareto}
\end{wrapfigure}

We also test our model on phylogenetic tree reconstruction, showing improved performance over classical and neural approaches. In this problem the aim is to infer evolutionary relationships from molecular
sequences. Each instance is a multiple sequence alignment (MSA), whose
constituent sequences correspond to the leaves of the HC tree.
  We use the experimental setting and baselines of \citet{Zhang2025NeuralNJ} and compare with their model NeuralNJ, which was shown to achieve state-of-the-art performance, including outperforming popular models like Phyloformer~\citep{Nesterenko2025Phyloformer}. We evaluate tree topology using normalized
Robinson--Foulds (RF) distance, the fraction of splits differing from the target tree.

 Tree-labeled training data is generated by simulated evolution~\citep{LyTrong2022AliSim} with between 20 and 100 leaves. Reaching the selected checkpoint took
\textbf{2.8 days} including validation, which, compared to NeuralNJ's reported \textbf{15 days} on an
A800 GPU~\citep[Supplementary Tab.~12]{Zhang2025NeuralNJ}, offers
a substantial practical improvement.\footnote{%
The RTX A6000 and A800 have boost clocks of 1.80 and
1.41\,GHz, respectively. The reported training-time difference is substantially
larger than this clock-frequency ratio, although differing GPU architectures
prevent a controlled comparison.}
We evaluate \nnlinkage{} against the released greedy NeuralNJ model and two classical algorithms, BIONJ and RAxML-NG. The synthetic test set contains all combinations of leaf count (20, 50, 100) and sequence length (256, 512, 1024), and is sampled using the same procedure as the training data. \cref{fig:phylo-synthetic-3x3-pareto} reports results for individual settings.

We next evaluate the same checkpoint on four datasets from
\citet{Zhou2018PhylogenomicBenchmarks}, which contain real molecular sequence
instances and their best-observed reference trees (\cref{fig:phylo-real-pareto}). Because these references
were themselves inferred with classical methods, agreement with them can
artificially inflate the apparent accuracy of likelihood-based reconstruction.
We therefore compare only against the neural baseline NeuralNJ here, demonstrating the ability of \nnlinkage{} to generalize out of distribution.  Across these instances, \nnlinkage{} achieves a mean RF
distance of .6075, compared with .6221 for NeuralNJ. Mean runtime is
.339\,s per instance versus 1.701\,s, approximately a 5$\times$ speedup.

\begin{figure}
\centering
\includegraphics[width=\linewidth,trim={6.182bp 7.802bp 5.634bp 6.856bp},clip]{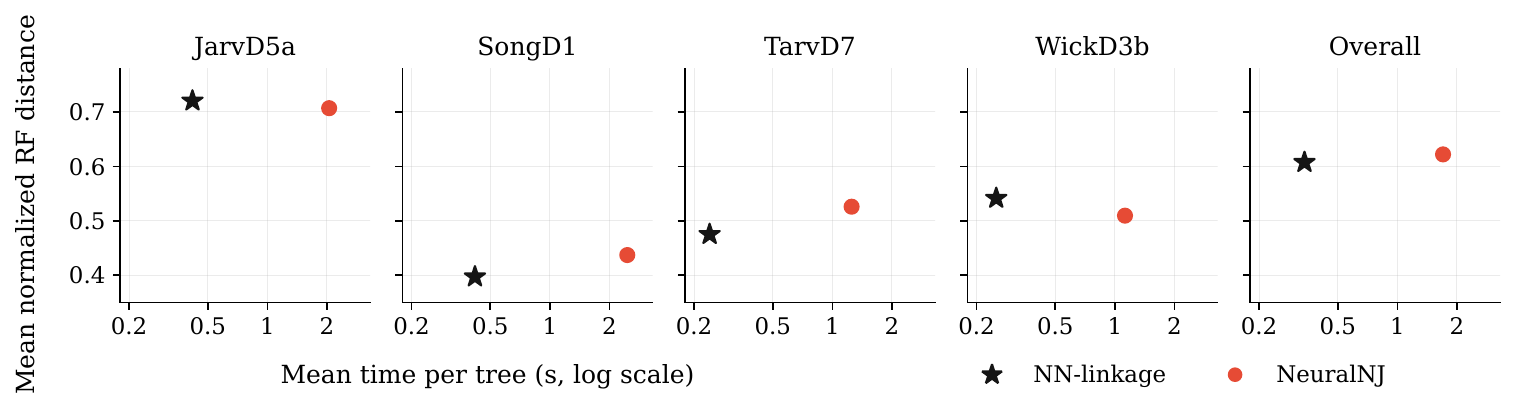}
\caption{Real phylogenetic data comparison. Mean normalized RF distance and
runtime on all 26,136 instances successfully processed by both models. Overall pools individual instances, rather than
equally weighting datasets. Lower RF and runtime are better.}
\label{fig:phylo-real-pareto}
\end{figure}

\nnlinkage{} is significantly faster than NeuralNJ on all benchmarks. It trades a modest increase in mean RF distance for speed on the
synthetic instances, while achieving a slightly improved RF distance on real phylogenetic data. The reversal in performance between in-distribution synthetic and real data suggests that \nnlinkage{} offers improved out-of-distribution generalization over NeuralNJ. Compared with BIONJ, which belongs to the class of high-speed distance-based methods, our model achieves comparable runtime with significantly better tree recovery. Compared with RAxML-NG, a leading maximum-likelihood approach, \nnlinkage{} improves in both tree recovery and runtime.

\section{Conclusion}

NNs are highly effective across many tasks, yet classical algorithms remain
central to applications such as VLSI design. \nnlinkage{} combines the recursive
structure of linkage algorithms with learned merge rules. It retains efficient
inference while achieving favorable quality--runtime tradeoffs in clock-tree
routing and phylogenetic reconstruction.

Future work includes extending this approach to objectives such as Dasgupta's
HC cost, which is NP-hard to minimize~\citep{Dasgupta2016}. Average linkage
provides a constant-factor approximation for its dual
objective~\citep{MoseleyWang2017}, raising the question of whether learned
agglomerative rules can improve this approximation guarantee. The relationship
between training data and algorithmic inductive bias is another important
direction, as suggested by results for Bellman--Ford-aligned NNs, where
minimizing a sparsity-regularized loss on a small set of training instances
guarantees recovery of Bellman--Ford updates~\citep{Nerem2026ShortestPaths}.
Can carefully constructed small instances similarly guarantee that a desired
merge rule is learned and generalizes to larger inputs?  More broadly, it is
exciting to consider which other families of algorithms could guide NN
architectures built around repeated computations, yielding further
data-dependent approaches that retain the efficiency of their classical
counterparts.
\clearpage
\section*{Acknowledgments}
This work was supported in part by the National Science Foundation under grants
CCF-2112665, IIS-2435808, and CNS-2622219.
We thank Andrew B. Kahng and Siavash Mirarab for helpful discussions.

\section*{AI Use Statement}
We used LLM tools to assist with polishing and editing writing and proofs,
experiment implementation, data generation, generating figures, literature
searches, and formatting references. The authors reviewed the AI-assisted work
and take responsibility for the final content.

\bibliographystyle{iclr2027_conference}
\bibliography{references}

\clearpage
\appendix

\section{Proofs}
\label[appendix]{app:proofs}

\begin{proof}[Proof of \cref{thm:equivariance}]
The leaf embeddings are permutation equivariant by the permutation equivariance of $E_\theta$. Next, the initial merge embeddings (for merges between singletons) are permutation equivariant because $\mathrm{MLP}_\mathrm{init}$ has permutation-invariant inputs. Suppose at some step $t$ all merge embeddings are permutation equivariant, so the selected merge is also equivariant. New merge embeddings are computed by $\mathrm{MLP}_\mathrm{update}$, which is invariant to swapping the two merged clusters $A$ and $B$, so the new merge embeddings remain
permutation equivariant. Induction over merge steps gives
$\mathcal{T}_\theta(\pi X)=\pi \mathcal{T}_\theta(X)$.
\end{proof}

\begin{proof}[Proof of \cref{lem:running_time}]
Initialization creates $\binom n2$ queue entries. Each of the $n-1$ merges inserts at most $n-1$ new entries onto the priority queue, for a total of $O(n^2)$ insertions. 
Each insertion requires one evaluation of $\mathrm{MLP}_{\mathrm{init}}$ or
$\mathrm{MLP}_{\mathrm{update}}$ and one score computation. For fixed network
size and embedding dimension, each takes constant time, so these computations
cost $O(n^2)$ in total.
The number of removals from the queue is also $O(n^2)$ since every inserted candidate is
popped at most once. Thus there are $O(n^2)$ queue operations, each costing
$O(\log n)$. Adding the $O(n^2)$ neural computation cost and the assumed
$O(n^2)$ encoder cost gives a total running time of $O(n^2\log n)$.
\end{proof}

\begin{proof}[Proof of \cref{thm:ranking_margin_recovery}]
Initially, every cluster is a singleton in $\mathcal{T}^\star$. If
$\cC_t\subseteq\mathcal C(\mathcal{T}^\star)$ and $|\cC_t|>1$, contracting each current
subtree to a leaf gives a full binary tree with a pair of sibling leaves.
Their corresponding clusters form a positive pair. The margin condition
therefore makes every highest-scoring pair positive, so the next merge again
forms a cluster of $\mathcal{T}^\star$. By induction, all $n-1$ merges belong to
$\mathcal{T}^\star$, and the output is $\mathcal{T}^\star$.
\end{proof}

\begin{proof}[Proof of \cref{thm:contextual_linkage_universality}]
View each tree as nested multisets of size two, with each multiset representing an internal node and containing that node's two children.
Using this representation, we first define a continuous injective encoding of trees, then show
how the leaf encoder and merge updates can approximate it and compute the
score. 

\noindent\textbf{Encoding unordered pairs.}
We begin by considering an injective representation of multisets of size two that we will later apply recursively to encode trees. For a Euclidean vector $u\in\R^r$, use the DeepSets feature map~\citep{Zaheer2017DeepSets}
\[
\varphi(u)=\bigl(u_1,u_1^2,\ u_2,u_1u_2,u_2^2,\ \ldots,\
u_r,u_1u_r,\ldots,u_r^2\bigr).
\]
This lists each coordinate followed by its products with earlier coordinates
and itself. The key property is that $\varphi(u)+\varphi(v)$ uniquely determines
the unordered pair $\{u,v\}$. Its linear coordinates give $a=u+v$, and its
quadratic coordinates give $Q=uu^\top+vv^\top$. Thus we know
\[
2Q-aa^\top=(u-v)(u-v)^\top.
\]
This determines $d=u-v$ up to sign, and hence the unordered pair
$\{u,v\}=\{(a+d)/2,(a-d)/2\}$. Therefore the summed encoding is injective
on unordered pairs.

Throughout the recursive construction, we extend vectors
with trailing zeros when needed. The ordering of $\varphi(u)$ makes this
convention consistent, since appending zeros to its input only appends zeros
to its output.

\noindent\textbf{Encoding trees.}
Let $\operatorname{ht}(\tau)$ be the height of $\tau$, where singletons have height zero. Define
\[
\begin{aligned}
z(\tau_{\{x_i\}})&=(0,x_i),\\
z(\{\tau_1,\tau_2\})
&=\left(1+\max\{\operatorname{ht}(\tau_1),\operatorname{ht}(\tau_2)\},\,
\varphi(z(\tau_1))+\varphi(z(\tau_2))\right).
\end{aligned}
\]
We prove injectivity of $z$ by induction on height $\ell$. The first coordinate
distinguishes trees of different heights. The base case, $\ell = 0$, is immediate as for leaf features $a$, we have defined $z(a) = (0,a,0,\dots)$, so $z(a) = z(b)$ iff $a=b$. Now for the inductive step, assume injectivity holds for heights $\ell < \ell'$, for some $\ell' > 0$. By our construction of the pair encoding $\varphi$, two trees have the same representation, i.e.\ $z(\{\tau_a,\tau_b\}) = z(\{\tau_c,\tau_d\})$, if and only if their child representations are the same $\{z(\tau_a),z(\tau_b)\} = \{z(\tau_c),z(\tau_d)\}$, which by the inductive hypothesis occurs if and only if $\{\tau_a,\tau_b\} = \{\tau_c,\tau_d\}$ (up to tree isomorphism). This proves injectivity for trees of height $\ell = \ell'$. 
Continuity follows as $z$ is a composition of continuous maps. 

\noindent\textbf{Constructing the update.}
We now show how the neural maps can approximate this encoding. 
We
write
\[
z_{A,B}=z(\{\tau_A,\tau_B\}),
\qquad
z^{\mathrm{sum}}_{A,B}=\varphi(z(\tau_A))+\varphi(z(\tau_B)),
\] %
so $z^{\mathrm{sum}}_{A,B}$ is the second part of $z_{A,B}$ following the height
coordinate. Thus $z_{A,B}$ encodes the tree that would result from merging
$\tau_A$ and $\tau_B$.
These denote exact encodings, which we will show the learned merge embeddings $\psi_{A,B}$ can approximate.
We construct a continuous update map $f$ satisfying
\[
f\bigl(z_{A,C}+z_{B,C},\,z_{A,B}\bigr)=z_{A\cup B,C},
\]
and a continuous readout $\rho$ satisfying
\[
\rho(z_{A,B})=s(\tau_A,\tau_B).
\]
Each merge embedding will approximate $(z_{A,B},\rho(z_{A,B}))$, with $w$
selecting the final score coordinate.

We choose a sufficiently wide $\mathrm{MLP}_{\mathrm{update}}$ to approximate
the continuous map $(f,\rho\circ f)$ using the encoding portions of
$\psi_{A,C}+\psi_{B,C}$ and $\psi_{A,B}$. It ignores the score and
absolute-difference coordinates.

Now we show the approximability of $z$ by induction. For the base case, consider singleton pairs. The leaf encoder $E_\theta$ applies $\mathrm{MLP}_{\mathrm{leaf}}$ to each leaf feature
to approximate $x_i\mapsto\varphi(z(\tau_{\{x_i\}}))$. Adding the two exact leaf
encodings and prepending height one gives $z_{\{x_i\},\{x_j\}}$. 
  Thus
$\mathrm{MLP}_{\mathrm{init}}$ can directly approximate this encoding and
its score under $\rho$.

For the inductive step, suppose the tree parts of the current merge embeddings
approximate $z_{A,B}$ for every pair. When $A$ and $B$ merge,
\cref{eq:update} gives
\[
\color{black}
\psi_{A\cup B,C}=\mathrm{MLP}_{\mathrm{update}}\!\left(
[\psi_{A,C}+\psi_{B,C},\,|\psi_{A,C}-\psi_{B,C}|,\,\psi_{A,B}]
\right).
\]
We first show how $z_{A,C}+z_{B,C}$ and $z_{A,B}$ determine the new encoding
$z_{A\cup B,C}$. These are exact quantities, while the update MLP receives
approximations to them through the merge embeddings.
The tree coordinates of the third input approximate
$z_{A,B}=z(\{\tau_A,\tau_B\})$, which is exactly
$z(\tau_{A\cup B})$ after the merge. To construct the new encoding, we also need
$z(\tau_C)$. The sum $z^{\mathrm{sum}}_{A,C}+z^{\mathrm{sum}}_{B,C}$ contains
one copy each of $\varphi(z(\tau_A))$ and $\varphi(z(\tau_B))$, and two
copies of $\varphi(z(\tau_C))$. Subtracting $z^{\mathrm{sum}}_{A,B}$ cancels
the contributions of $A$ and $B$, giving
\[
\varphi(z(\tau_C))
=\frac{z^{\mathrm{sum}}_{A,C}+z^{\mathrm{sum}}_{B,C}
-z^{\mathrm{sum}}_{A,B}}{2}.
\]
The linear terms of $\varphi(z(\tau_C))$ give the coordinates
of $z(\tau_C)$, including its height coordinate. We now have both child encodings,
so the recursion gives
\[
z_{A\cup B,C}
=\left(1+\max\{\operatorname{ht}(\tau_{A\cup B}),\operatorname{ht}(\tau_C)\},\,
\varphi(z_{A,B})+\varphi(z(\tau_C))\right).
\]
This defines $f$ as a continuous function of
$z_{A,C}+z_{B,C}$ and $z_{A,B}$. \cref{eq:update} supplies
approximations to these quantities in its tree coordinates.

Since trees with at most $N$ leaves have height at most $N-1$,
their encodings use only finitely many coordinates. Choose a common bound $D$ for the number of coordinates, and extend shorter encodings with trailing zeros so that all encodings $z_{A,B}$ have the same dimension. 
Since there are at most $N-1$ merge steps, uniform continuity lets us choose
the MLP approximations so that the resulting encodings have arbitrarily small
error throughout the run.

Since $z$ is continuous and injective on a compact domain, it is an embedding (a homeomorphism onto its image). 
Thus the continuous invariant score can be written as
\[
s(\tau_A,\tau_B)=\rho(z_{A,B})
\]
for a continuous readout $\rho$. Since the model can uniformly approximate
$z_{A,B}$ and the MLP readout can uniformly approximate $\rho$, uniform
continuity allows us to combine these approximations, giving universality.

\noindent\textbf{Global context.}
For globally dependent scores, we attach the same encoding of the entire input
to every leaf and reuse the preceding construction.
For the transformer case, use the DeepSets representation
\[
\gamma(x)=\bigl(x_1^{\alpha_1}\cdots x_p^{\alpha_p}\bigr)_{
\alpha\in\mathbb N_0^p,\,\alpha_1+\cdots+\alpha_p\le N},
\qquad
\Gamma(X)=\sum_{i=1}^n\gamma(x_i).
\]
These summed monomials identify $X$ up to permutation, including its size
\citep[Thm.~2.12]{Alberti2023Sumformer}. Replace $x_i$ by
$y_i=(x_i,\Gamma(X))$ in the construction of $z$, choosing $D$ for these larger
leaf features. The required leaf map is
$E_\theta(X)_i\approx\varphi((0,y_i))$. A transformer approximates this map
uniformly using pointwise maps and uniform attention, with $n$ converting
averages into sums~\citep[Thm.~4.3]{Alberti2023Sumformer}. The same proof
then applies to $s^X(\tau_A,\tau_B)$.
\end{proof}

\begin{proof}[Proof of \cref{thm:lw}]
Assume scores for merges between singletons are initialized exactly. For constant coefficients
$\alpha_A=\alpha_B=\alpha$, set \textcolor{black}{$\psi_{A,B}=s(\tau_A,\tau_B)$} and $w=1$.
A rectified linear unit (ReLU) MLP can exactly compute the linear combination of update inputs
\[
\color{black}
\begin{aligned}
s(\tau_{A\cup B},\tau_C)={}&
\alpha\bigl(s(\tau_A,\tau_C)+s(\tau_B,\tau_C)\bigr)\\
&+\beta s(\tau_A,\tau_B)
+\gamma\left|s(\tau_A,\tau_C)-s(\tau_B,\tau_C)\right|.
\end{aligned}
\]
Thus the same parameters and one coordinate realize the recurrence exactly at
every input size. \cref{tab:lw_params} gives the coefficients for the named
linkages under our score convention.
\end{proof}

\begin{proof}[Proof of \cref{cor:robust-sl}]
The radius $r_k(x_i)$ is the $k$th order statistic of the continuous distances
$\{\|x_i-x_j\|:j\ne i\}$, hence is continuous in $X$, including at ties.
The robust SL score
$s^X(\tau_A,\tau_B)=-\min_{i\in A,j\in B}\lambda_{ij}$ is therefore continuous.
It is also invariant to input permutations, tree isomorphism, and swapping
$A$ and $B$. The transformer case of
\cref{thm:contextual_linkage_universality} therefore applies.
\end{proof}

\FloatBarrier
\section{LW Parameters}
\label[appendix]{app:lw_parameters}

\begin{table}[H]
\centering
\setlength{\tabcolsep}{4pt}
\begin{tabular*}{\linewidth}{@{\extracolsep{\fill}}p{0.34\linewidth}cccc@{}}
\toprule
\textbf{Method} & $\alpha_A$ & $\alpha_B$ & $\beta$ & $\gamma$ \\
\midrule
SL
& $\tfrac{1}{2}$ & $\tfrac{1}{2}$ & $0$ & $\tfrac{1}{2}$ \\[4pt]
CL
& $\tfrac{1}{2}$ & $\tfrac{1}{2}$ & $0$ & $-\tfrac{1}{2}$ \\[4pt]
Group average
& $\tfrac{n_A}{n_A+n_B}$ & $\tfrac{n_B}{n_A+n_B}$ & $0$ & $0$ \\[4pt]
WPGMA
& $\tfrac{1}{2}$ & $\tfrac{1}{2}$ & $0$ & $0$ \\[4pt]
Centroid
& $\tfrac{n_A}{n_A+n_B}$ & $\tfrac{n_B}{n_A+n_B}$
& $-\tfrac{n_A n_B}{(n_A+n_B)^2}$ & $0$ \\[4pt]
Median
& $\tfrac{1}{2}$ & $\tfrac{1}{2}$ & $-\tfrac{1}{4}$ & $0$ \\[4pt]
Minimum variance (Ward's)
& $\tfrac{n_A+n_C}{n_A+n_B+n_C}$
& $\tfrac{n_B+n_C}{n_A+n_B+n_C}$
& $-\tfrac{n_C}{n_A+n_B+n_C}$ & $0$ \\[4pt]
\bottomrule
\end{tabular*}
\caption{Parameter choices for common LW HC
methods. Scores are negative dissimilarities, using squared dissimilarities for
centroid, median, and Ward. See \citet{Milligan1979} for discussion.}
\label{tab:lw_params}
\end{table}

\FloatBarrier
\section{Experiment Details}
\label[appendix]{app:experiment_details}
\raggedbottom

\paragraph{Training Presentations.}
For clock-tree and phylogenetic training, we define a \emph{training presentation}
as one loss-bearing traversal of the merge trajectory for one instance.
Learning-rate schedules count presentations, rather than optimizer steps,
because each optimizer update accumulates gradients over multiple presentations.

\paragraph{Model Configuration.}
\cref{tab:experiment-config} summarizes the clock-tree and phylogenetic models. 
The rings-and-disks configuration is given in \cref{app:rings-disks}.
Both merge MLPs have one ReLU hidden layer and a linear output; the initializer
uses \cref{eq:init_pair}. In addition to the three merge features in
\cref{eq:update}, the update receives
\[
 [\bar h_{A\cup B},\ \bar h_C,\ \bar h_{X\setminus(A\cup B\cup C)}],\quad\text{where}
 \qquad \bar h_D=|D|^{-1}\sum_{i\in D}h_i.
\]
 These additional features preserve
invariance to exchanging $A$ and $B$.

\begin{table}[H]
\centering
\small
\begin{tabular*}{\linewidth}{@{\extracolsep{\fill}}lll@{}}
\toprule
Setting & Clock tree & Phylogenetics \\
\midrule
Encoder & Transformer & NeuralNJ axial MSA encoder \\
Layers / attention heads & 2 / 4 & 6 / 8 \\
Encoder width / feed-forward width & 44 / 176 & 64 / 256 \\
Encoder activation / dropout & ReLU / 0 & GELU / 0.4 \\
Leaf and merge dimension $d$ & 11 & 64 \\
Merge MLP hidden width & 44 & 128 \\
Total trainable parameters & 53,196 & 487,360 \\
Optimizer / weight decay & AdamW / $10^{-4}$ & Adam / 0 \\
Trees per gradient accumulation & 100 & 10 \\
Forward batch size & 100 & 4, 2, or 1 (see text) \\
Initial / minimum learning rate & $5\!\times\!10^{-4}$ / $10^{-5}$ & $5\!\times\!10^{-4}$ / $10^{-5}$ \\
Cosine horizon (tree presentations) & 500,000, then constant & 350,000 \\
Training budget (presentations) & 1,500,000 & 216,000 \\
Selected checkpoint (presentations) & 1,300,000 & 196,000 \\
Time to selected checkpoint & $\approx26$ hours & $\approx2.8$ days \\
\bottomrule
\end{tabular*}
\caption{Architecture and optimization settings for the reported clock-tree
and phylogenetic checkpoints. Times include validation.}
\label{tab:experiment-config}
\end{table}

\paragraph{Loss and Optimization.}
Clock-tree training compares the highest-scoring compatible positive against
at most 50 highest-scoring margin-violating negatives.
Phylogenetic training uses all compatible positives and all violating comparisons.
Each tree's hinge loss is normalized by its number of retained strictly
violating comparisons across all merge steps:
\[
 \widehat{\mathcal L}(\mathcal T)=
 \frac{\sum_{(t,p,q)\in V_{\mathcal T}}[m+s_\theta(q)-s_\theta(p)]_+}{|V_{\mathcal T}|},
 \qquad m=0.5,
\]
where $V_{\mathcal T}$ contains the retained comparisons with
$m+s_\theta(q)-s_\theta(p)>0$. We average tree losses within each accumulation window,
use default Adam momentum parameters, and apply no gradient clipping.
The cosine schedules are specified in \cref{tab:experiment-config}.
 Phylogenetic forward
batches group equal taxon and site counts, with up to four trees for
$n\le45$ and two for $46\le n\le70$;
larger trees are processed sequentially.
\cref{fig:training-validation-curves} shows the validation histories
and the selected checkpoints.

\begin{figure}[htbp]
\centering
\includegraphics[width=\linewidth,trim={6.182bp 6.182bp 5.635bp 6.194bp},clip]{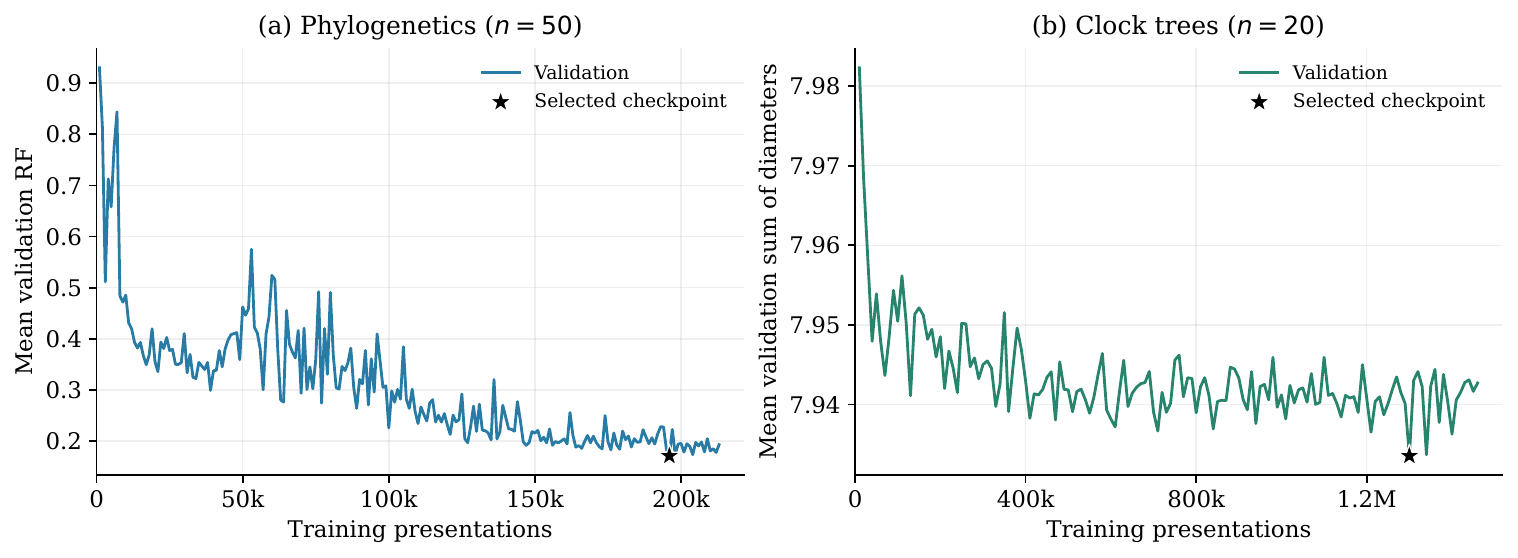}
\caption{Validation curves for the phylogenetic and clock-tree models. 
(a) Mean RF on 90 held-out 50-taxon alignments per evaluation.
(b) Mean sum of $L_1$ diameters on 250 held-out 20-point sets.
Stars mark the selected checkpoints at 196,000 and 1,300,000 training
presentations, respectively.}
\label{fig:training-validation-curves}
\end{figure}

\FloatBarrier
\subsection{Rings and Disks}
\label[appendix]{app:rings-disks}

We design instances such that each 120-point cloud has 30 points per cluster, where they are equally spaced points on
concentric rings of radii 0.78 and 1, and uniform points in disks of radius
0.35 whose centers are 0.72 apart. The ring center and disk-pair midpoint
are 5 units apart at a random angle; we randomize the ring phase, translate
by a uniform vector in $[-2,2]^2$, and shuffle the points. The target trajectory that we train to replicate first builds the ring
subtree using SL, then the disk subtree using CL, then joins the motifs.

\paragraph{Screened Distribution.}
During data generation, we retain an instance only if SL on its ring points and CL on its disk
points each permit perfect two-subtree recovery, and the combined target
hierarchy permits perfect four-subtree recovery. This rejection sampling
defines the training, validation, and test distribution, with a training
acceptance rate of 41.3\%.

\paragraph{Model, Training, and Selection.}
The point encoder maps each two-dimensional point to 140 features, applies two four-head transformer layers (feed-forward width 560, ReLU, zero dropout, and no positional encoding), and then maps each output to a 35-dimensional embedding. The initializer and update MLPs have input widths 70 and 210, respectively, each with hidden width 140 and output width 35. The update MLP includes the set average features defined above. Including the linear ranker, the model has 528,780 parameters.
We train from scratch on 500,000 screened 120-point clouds, sampling batches
of 16 with replacement. We treat the merge selected by the target trajectory
as the sole positive ($m=0.5$); all other active pairs, including other
target-compatible merges, are eligible negatives. At each merge we retain up
to 120 highest-scoring strictly violating negatives and normalize each
tree's hinge sum by its retained-comparison count, then average over
loss-bearing trees in the batch. Adam uses learning rate $3\!\times\!10^{-4}$,
zero weight decay, gradient-norm clipping at 5, and a cosine schedule with
minimum $6\!\times\!10^{-6}$ and horizon 30,000 updates. Validation on 500 independent clouds
every 1,000 updates selects the minimum mean pruning error, attained at update 2,000
(32,000 sampled trees).

\paragraph{Evaluation.}
We use 100 independent clouds at each of $n=120,180,240$, with equal numbers
of points in each cluster and the same geometry and screening. We define error as the
minimum point misclassification rate over four-subtree prunings of the
inferred hierarchy, with a bijection from subtrees to ground-truth labels.
The convex combination of linkage functions baseline uses
$(1-\alpha)\mathrm{MIN}+\alpha\mathrm{MAX}$,
with one shared $\alpha\in\{0,0.05,\ldots,1\}$ per size minimizing mean error
on that size's evaluation set. We also report SL and CL, the endpoints.

\Cref{tab:rings-disks-results,fig:rings-disks-size-generalization} summarize
classification error. The selected checkpoint perfectly recovers all four
clusters on 91, 85, and 77 of the 100 clouds at each size, respectively.

\begin{table}[htbp]
\centering
\small
\begin{tabular}{@{}rrrrrr@{}}
\toprule
$n$ & \nnlinkage{} & Best $\alpha$-linkage & $\alpha$ & SL & CL \\
\midrule
120 & $0.40\pm0.16$ & $14.82\pm1.06$ & 0.00 & $14.82\pm1.06$ & $24.58\pm0.05$ \\
180 & $0.46\pm0.13$ & $2.13\pm0.56$ & 0.10 & $12.34\pm1.13$ & $24.65\pm0.04$ \\
240 & $0.61\pm0.12$ & $0.95\pm0.35$ & 0.10 & $14.14\pm1.15$ & $24.73\pm0.03$ \\
\bottomrule
\end{tabular}
\caption{Rings-and-disks point misclassification (\%), mean $\pm$ one standard
error over 100 held-out clouds per size. Lower is better. The baseline uses
one evaluation-set-tuned $\alpha$ per size.}
\label{tab:rings-disks-results}
\end{table}

\paragraph{Local Merge Behavior.}
On the 100 held-out $n=120$ instances, we compare the model's chosen merges
with SL on rings and CL on disks, using the reference linkage's objective:
minimum intercluster Euclidean distance for SL and maximum intercluster
Euclidean distance for CL. At each pure same-label merge along the model's
trajectory, let $d_{\mathrm{model}}$ be the chosen pair's objective and
$d_{\mathrm{ref}}$ the smallest objective among active same-label pairs in
that motif. Averaging $100|d_{\mathrm{model}}-d_{\mathrm{ref}}|/d_{\mathrm{ref}}$
over these merges gives \textbf{0.010\% for rings (SL)} and
\textbf{0.485\% for disks (CL)}.

\begin{figure}[htbp]
\centering
\includegraphics[width=0.75\linewidth]{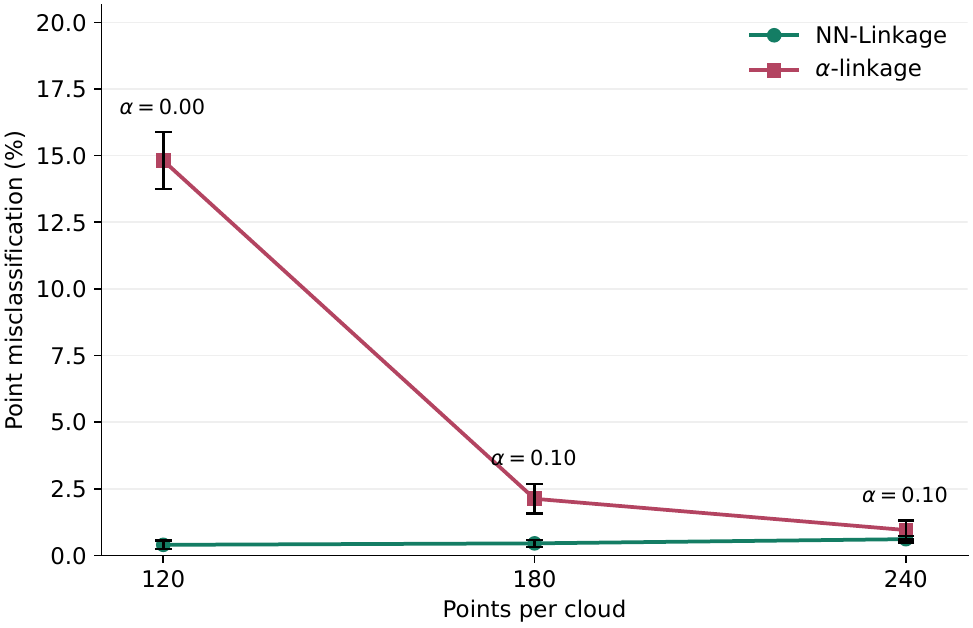}
\caption{Rings-and-disks classification error at 120, 180, and 240 total
points, evaluated over 100 held-out instances at each size; error bars show
one standard error. Annotations give the evaluation-set-tuned mixing
parameter for the $\alpha$-linkage baseline. Lower is better.}
\label{fig:rings-disks-size-generalization}
\end{figure}

\FloatBarrier
\subsection{Clock Tree Routing}

\paragraph{Reduction to Sum of Diameters.}
\label[appendix]{app:clock-tree-reduction}
Under the linear delay model, optimal deferred-merge embedding (DME) of a
fixed topology has wire length
$\tfrac12\sum_{A\in\cC(\mathcal T)}\operatorname{diam}_1(A)
+\tfrac12\operatorname{diam}_1(X)$~\citep{Chao1992DME,ChenYoung2019DimSum}.
The final term is independent of topology so can be removed, giving the diameter-sum objective
used in the main text.

\paragraph{Training and Selection.}
The encoder linearly maps each two-dimensional point to 44 features, applies the transformer blocks in \cref{tab:experiment-config} without positional embeddings, and then linearly maps each output to an 11-dimensional embedding. We select the checkpoint with the lowest mean diameter sum on 250 held-out 20-point validation sets. \cref{tab:uniform-sizegen} gives the size-generalization results underlying \cref{fig:uniform-sizegen}.

\paragraph{VLSI Protocol.}
All 20 BST-DME and ISPD 2009/2010 instances are listed in
\cref{tab:vlsi-instances} along with the performance of CL and \nnlinkage{}. Each subtree's coordinates are rescaled
independently along each axis to $[0,1]$ for model input; objective evaluation uses the original coordinates. The recursive replacements
described in the main text use breadth-first traversal of the current tree,
including accepted replacements.

For CL+$k$, we perform one bottom-up refinement pass: at each eligible node,
repeatedly split the largest-diameter selected subtree until there are $k$
disjoint subtrees, then optimize their interconnection with the DimSum dynamic
program. Model+$k$ applies this pass to each proposed replacement before
acceptance. \cref{tab:vlsi-instances,tab:vlsi-pareto} report per-instance
results and aggregate quality--runtime comparisons.

\begin{table}[H]
\centering
\begin{tabular*}{\linewidth}{@{\extracolsep{\fill}}rrrrr@{}}
\toprule
\textbf{$n$} & \textbf{Clouds} & \textbf{Mean \% vs CL} & \textbf{Median \% vs CL} & \textbf{Std. dev.} \\
\midrule
20 & 50 & -0.541 & 0.000 & 1.672 \\
30 & 50 & -0.601 & -0.038 & 1.261 \\
40 & 50 & -0.294 & -0.195 & 1.508 \\
50 & 50 & -0.468 & -0.209 & 1.472 \\
100 & 50 & -0.265 & -0.046 & 1.136 \\
250 & 50 & -0.392 & -0.396 & 0.980 \\
500 & 50 & -0.117 & -0.185 & 0.654 \\
750 & 50 & 0.147 & 0.117 & 0.709 \\
1000 & 50 & 0.494 & 0.406 & 0.669
\\
\bottomrule
\end{tabular*}
\caption{Uniform-point size generalization without recursive replacement
or local refinement. Statistics
summarize per-cloud percentage objective changes versus CL; negative is better.}
\label{tab:uniform-sizegen}
\end{table}

\begin{table}[H]
\centering
\setlength{\tabcolsep}{3pt}
\begin{tabular*}{\linewidth}{@{\extracolsep{\fill}}rrrrrr@{}}
\toprule
\textbf{$k$} & \textbf{CL mean \%} & \textbf{CL time (s)} & \textbf{Model mean \%} & \textbf{Model time (s)} & \textbf{Model median \%} \\
\midrule
8 & -1.187 & 1.001 & -2.085 & 5.200 & -1.776 \\
10 & -1.456 & 1.242 & -2.231 & 5.636 & -1.999 \\
12 & -1.726 & 1.483 & -2.402 & 6.282 & -2.277 \\
14 & -1.957 & 2.045 & -2.599 & 8.722 & -2.456 \\
16 & -2.109 & 4.768 & -2.598 & 22.885 & -2.437 \\
17 & -2.114 & 10.387 & -- & -- & -- \\
18 & -2.177 & 25.022 & -- & -- & --
\\
\bottomrule
\end{tabular*}
\caption{VLSI quality and runtime comparison. Objective changes and runtimes
are averaged over the same 20 instances.
One timed pass per instance uses one CPU thread and an RTX A6000 GPU for
\nnlinkage{}. Times include CL construction and all applicable refinement and
recursive replacement, but exclude parsing, model loading, and startup warmup.}
\label{tab:vlsi-pareto}
\end{table}

\begin{table}[p]
\centering
\footnotesize
\setlength{\tabcolsep}{3pt}
\textbf{CL with refinement}\\[3pt]
\begin{tabular*}{\linewidth}{@{\extracolsep{\fill}}lrrrrrrrr@{}}
\toprule
Instance & Sinks & $k=8$ & $10$ & $12$ & $14$ & $16$ & $17$ & $18$ \\
\midrule
s1423 & 74 & -0.889 & -0.985 & -1.135 & -1.435 & -1.373 & -1.321 & -1.545 \\
s1r1 & 81 & -0.413 & -1.612 & -2.216 & -2.227 & -2.227 & -2.424 & -2.424 \\
s2r1 & 88 & -2.784 & -2.980 & -3.144 & -3.441 & -3.539 & -3.539 & -3.539 \\
s3r1 & 131 & -1.055 & -1.843 & -2.128 & -2.472 & -2.264 & -1.878 & -1.893 \\
s5378 & 179 & -0.956 & -0.977 & -1.033 & -1.037 & -1.679 & -1.679 & -1.710 \\
r1 & 267 & -1.038 & -1.660 & -2.754 & -3.197 & -3.275 & -3.282 & -3.296 \\
p1 & 269 & -1.509 & -1.615 & -1.775 & -2.352 & -3.120 & -3.277 & -3.473 \\
s15850 & 597 & -0.835 & -0.960 & -1.015 & -1.029 & -1.160 & -1.088 & -1.150 \\
r2 & 598 & -2.193 & -1.794 & -2.518 & -2.674 & -3.327 & -3.311 & -3.322 \\
p2 & 603 & -1.368 & -1.654 & -1.940 & -2.300 & -2.439 & -2.503 & -2.862 \\
s4r3 & 623 & -0.848 & -1.022 & -1.325 & -1.431 & -1.514 & -1.502 & -1.586 \\
r3 & 862 & -0.771 & -1.030 & -1.555 & -1.787 & -1.744 & -1.758 & -1.864 \\
06 & 981 & -1.791 & -2.013 & -2.142 & -2.239 & -2.296 & -2.256 & -2.290 \\
05 & 1016 & -1.360 & -1.877 & -2.083 & -2.252 & -2.352 & -2.349 & -2.366 \\
08 & 1134 & -1.010 & -1.220 & -1.223 & -1.442 & -1.402 & -1.453 & -1.546 \\
03 & 1200 & -0.876 & -1.064 & -1.177 & -1.144 & -1.181 & -1.176 & -1.188 \\
04 & 1845 & -1.110 & -1.260 & -1.463 & -1.814 & -2.122 & -2.253 & -2.252 \\
r4 & 1903 & -1.146 & -1.572 & -1.699 & -1.893 & -2.008 & -1.958 & -1.946 \\
07 & 1915 & -1.114 & -1.169 & -1.224 & -1.634 & -1.736 & -1.792 & -1.762 \\
r5 & 3101 & -0.682 & -0.814 & -0.966 & -1.329 & -1.415 & -1.482 & -1.527 \\
\midrule
Mean &  & -1.187 & -1.456 & -1.726 & -1.957 & -2.109 & -2.114 & -2.177
\\
\bottomrule
\end{tabular*}
\par\medskip
\textbf{Recursive \nnlinkage{} with optional refinement}\\[3pt]
\begin{tabular*}{\linewidth}{@{\extracolsep{\fill}}lrrrrrrr@{}}
\toprule
Instance & Sinks & No refinement & $k=8$ & $10$ & $12$ & $14$ & $16$ \\
\midrule
s1423 & 74 & 0.000 & -1.329 & -1.373 & -1.667 & -1.627 & -1.373 \\
s1r1 & 81 & -1.133 & -2.726 & -2.031 & -2.635 & -3.050 & -2.696 \\
s2r1 & 88 & -1.238 & -3.060 & -3.803 & -4.388 & -4.361 & -4.191 \\
s3r1 & 131 & -0.939 & -1.413 & -1.968 & -2.339 & -2.789 & -2.474 \\
s5378 & 179 & -0.205 & -1.794 & -1.791 & -1.753 & -1.739 & -1.690 \\
r1 & 267 & -2.712 & -3.211 & -3.351 & -3.369 & -3.762 & -4.035 \\
p1 & 269 & -3.183 & -3.657 & -3.998 & -4.021 & -4.468 & -4.433 \\
s15850 & 597 & -0.089 & -1.489 & -1.601 & -1.630 & -1.506 & -1.561 \\
r2 & 598 & -1.634 & -2.961 & -2.788 & -3.483 & -3.570 & -3.835 \\
p2 & 603 & -1.498 & -2.739 & -2.972 & -2.549 & -2.942 & -3.008 \\
s4r3 & 623 & -0.358 & -1.427 & -1.421 & -1.518 & -1.789 & -1.680 \\
r3 & 862 & -0.452 & -1.819 & -2.048 & -2.351 & -2.506 & -2.399 \\
06 & 981 & -0.625 & -2.873 & -2.964 & -2.829 & -3.143 & -3.187 \\
05 & 1016 & -0.481 & -1.820 & -2.259 & -2.359 & -2.369 & -2.378 \\
08 & 1134 & -0.472 & -1.719 & -1.741 & -1.585 & -1.910 & -1.881 \\
03 & 1200 & -0.378 & -1.255 & -1.227 & -1.256 & -1.286 & -1.294 \\
04 & 1845 & -0.330 & -1.745 & -1.897 & -2.205 & -2.591 & -2.920 \\
r4 & 1903 & -0.233 & -1.678 & -2.055 & -2.215 & -2.406 & -2.510 \\
07 & 1915 & -0.452 & -1.757 & -1.964 & -2.083 & -2.305 & -2.401 \\
r5 & 3101 & -0.181 & -1.230 & -1.362 & -1.800 & -1.857 & -2.017 \\
\midrule
Mean &  & -0.829 & -2.085 & -2.231 & -2.402 & -2.599 & -2.598
\\
\bottomrule
\end{tabular*}
\caption{Per-instance diameter-sum change (\%) relative to raw CL, negative is better. }
\label{tab:vlsi-instances}
\end{table}

\FloatBarrier
\subsection{Phylogenetic Reconstruction}

\paragraph{Data and Checkpoint.}
Synthetic alignments follow NeuralNJ's AliSim/GTR+I+G generation settings
\citep{Zhang2025NeuralNJ,LyTrong2022AliSim}. We use 37,349 alignments with counts proportional to
$\exp(-(n-50)^2/(2\cdot15^2))$ for $20\le n\le100$, with lengths
128--1024 in increments of 32, approximately balanced within each choice of $n$.
We select the checkpoint with the lowest mean 50-taxon validation RF.
The synthetic test grid contains 128 independently generated alignments per
setting, disjoint from validation and shared by all four methods.

\paragraph{Empirical Coverage and References.}
We use the original \texttt{Best\_observed} likelihood trees from
\citet{Zhou2018PhylogenomicBenchmarks} as references.
We exclude alignments exceeding NeuralNJ's reported limits of 17,000 sites
for JarvD5a, SongD1, and TarvD7, and 4,400 for WickD3b; retained alignments
are not truncated. Both neural models were attempted on all 26,177 eligible
alignments. \nnlinkage{} ran out of GPU memory on 12 and NeuralNJ on 41 instances.
All such instances were in JarvD5a.
\Cref{fig:phylo-real-pareto,tab:phylo-real-neural-full} report the 26,136
alignments successfully processed by both models, using identical reference
trees and pooled means that weight alignments equally.
For the separate four-method comparison in \cref{tab:phylo-real-full},
we sample 250 shared successes per dataset uniformly without replacement
and use the same 1,000 inputs for every method.

\begin{table}[H]
\centering
\small
\begin{tabular}{@{}lrrr@{}}
\toprule
Dataset & Taxa & Median sites & Site range \\
\midrule
JarvD5a & 38--48 & 1,522 & 66--16,942 \\
SongD1 & 37 & 2,588 & 486--14,659 \\
TarvD7 & 10--36 & 1,749 & 288--16,812 \\
WickD3b & 51--97 & 887 & 278--3,868 \\
\bottomrule
\end{tabular}
\caption{Dimensions of the 26,136 matched empirical alignments.}
\label{tab:phylo-real-dimensions}
\end{table}

\paragraph{Encoder and Sitewise Scoring.}
The leaf encoder is taken directly from NeuralNJ's released implementation. It maps each four-channel nucleotide vector to 64 features, applies a second 64-feature MLP layer, and then applies the axial transformer blocks in \cref{tab:experiment-config}, with attention across sites and taxa, and no positional embeddings. A/C/G/T use four-channel one-hot vectors, where gaps and ambiguity symbols use the all-ones vector. 
We retain a separate leaf and merge embedding at each site, apply the shared initializer and update MLPs independently at each site, and sum scores over valid sites:
$s_\theta^X(\tau_A,\tau_B)=\sum_\ell w^\top\psi_{A,B,\ell}$.
The additional leaf averages defined above are also computed separately at each site. No further attention across sites is applied during merging. 

\paragraph{Baselines.}
NeuralNJ uses the released GTR+I+G checkpoint with greedy decoding.
BIONJ is IQ-TREE's distance-based starting
tree under GTR, with topology search disabled. For RAxML-NG, we take the
binary (version 1.2.0) and search settings directly from NeuralNJ's released
code~\citep{Zhang2025NeuralNJ}: GTR+G, ten random and ten parsimony starting
trees, seed 7788, and \texttt{--opt-model off}.

\paragraph{Inference Timing.}
\label[appendix]{app:phylo-timing}
All methods are timed locally. Runtime includes input preparation and tree
construction, excluding model loading and RF scoring; empirical timings
start before file input and end after Newick construction. Both neural
models use an RTX A6000 with 32-bit floating-point (FP32) precision and
process one alignment at a time. BIONJ uses one CPU thread; RAxML-NG uses
eight CPU threads for both synthetic and empirical data.

\begin{table}[tbp]
\centering
\small
\begin{tabular*}{\linewidth}{@{\extracolsep{\fill}}lrrrrr@{}}
\toprule
& & \multicolumn{2}{c}{\nnlinkage{}} & \multicolumn{2}{c}{NeuralNJ} \\
\cmidrule(lr){3-4}\cmidrule(lr){5-6}
Dataset & Trees & RF & Time (s) & RF & Time (s) \\
\midrule
JarvD5a & 14123 & 0.7203 & 0.418 & 0.7070 & 2.052 \\
SongD1 & 423 & 0.3972 & 0.418 & 0.4371 & 2.461 \\
TarvD7 & 11166 & 0.4751 & 0.240 & 0.5259 & 1.251 \\
WickD3b & 424 & 0.5414 & 0.251 & 0.5094 & 1.125 \\
Overall & 26136 & 0.6075 & 0.339 & 0.6221 & 1.701
\\
\bottomrule
\end{tabular*}
\caption{Full  neural-method evaluation underlying
\cref{fig:phylo-real-pareto}: mean normalized RF and runtime on 26,136
shared successful alignments. Overall pools individual alignments.}
\label{tab:phylo-real-neural-full}
\end{table}

\begin{figure}[H]
\centering
\includegraphics[width=\linewidth,trim={9.620bp 7.982bp 2.402bp 13.922bp},clip]{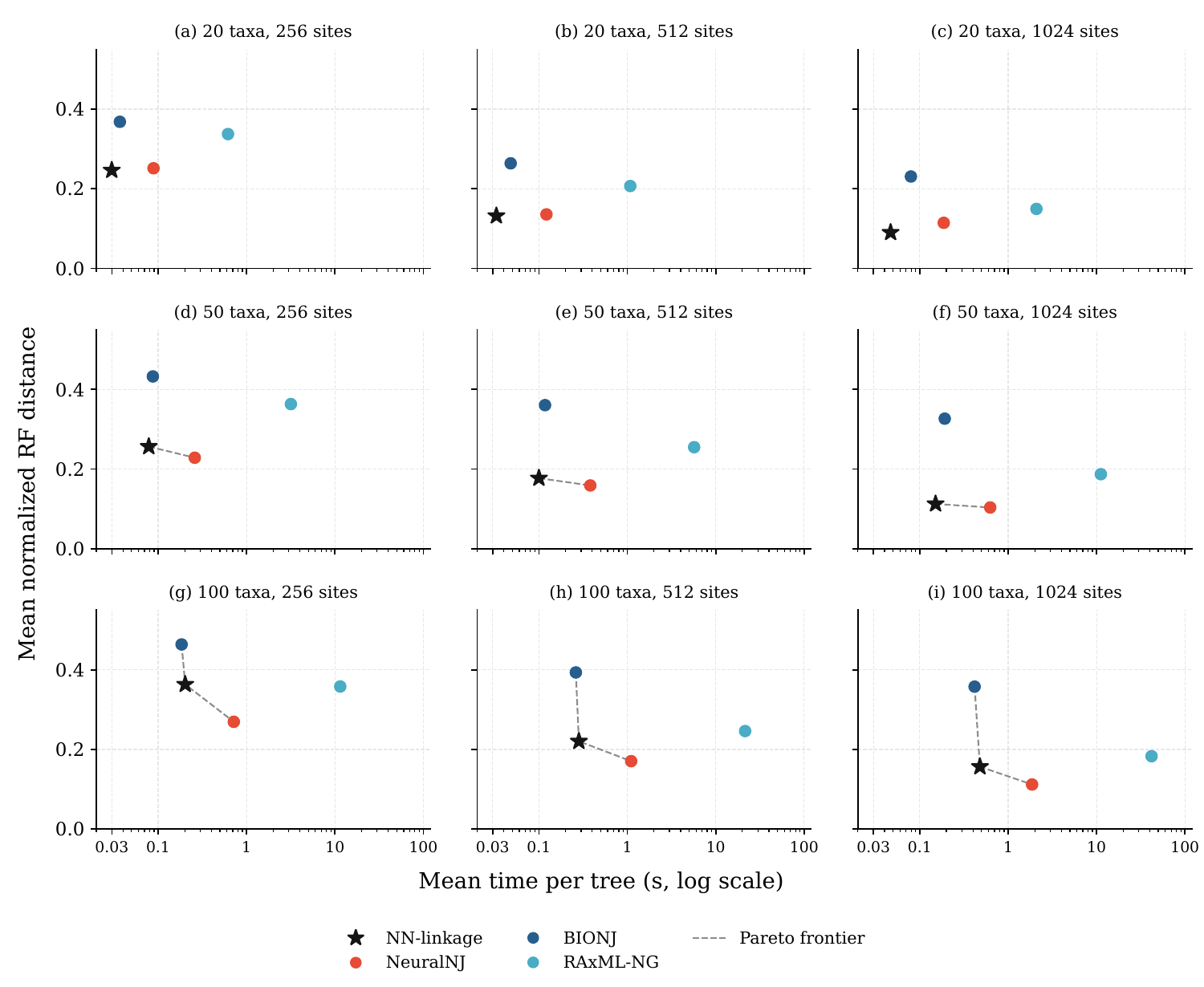}
\caption{Synthetic phylogenetic quality--runtime panels for all combinations of
20, 50, and 100 taxa with 256, 512, and 1024 sites. Each point averages the same
128 held-out instances within a panel. Dashed lines join nondominated points;
numerical results are given in \cref{tab:phylo-synthetic-full}.}
\label{fig:phylo-synthetic-3x3-pareto}
\end{figure}

\begin{table}[tbp]
\centering
\small
\setlength{\tabcolsep}{3pt}
\begin{tabular*}{\linewidth}{@{\extracolsep{\fill}}lrrrrrrrrr@{}}
\toprule
& & \multicolumn{2}{c}{\nnlinkage{}} & \multicolumn{2}{c}{NeuralNJ}
& \multicolumn{2}{c}{BIONJ} & \multicolumn{2}{c}{RAxML-NG} \\
\cmidrule(lr){3-4}\cmidrule(lr){5-6}\cmidrule(lr){7-8}\cmidrule(lr){9-10}
Taxa / sites & Trees & RF & Time (s) & RF & Time (s) & RF & Time (s) & RF & Time (s) \\
\midrule
20 / 256 & 128 & 0.2463 & 0.030 & 0.2514 & 0.089 & 0.3681 & 0.037 & 0.3373 & 0.617 \\
20 / 512 & 128 & 0.1319 & 0.033 & 0.1356 & 0.122 & 0.2638 & 0.048 & 0.2068 & 1.080 \\
20 / 1024 & 128 & 0.0901 & 0.047 & 0.1144 & 0.187 & 0.2307 & 0.080 & 0.1494 & 2.090 \\
50 / 256 & 128 & 0.2565 & 0.078 & 0.2286 & 0.260 & 0.4327 & 0.087 & 0.3632 & 3.174 \\
50 / 512 & 128 & 0.1772 & 0.100 & 0.1589 & 0.381 & 0.3605 & 0.117 & 0.2550 & 5.696 \\
50 / 1024 & 128 & 0.1125 & 0.151 & 0.1036 & 0.627 & 0.3266 & 0.192 & 0.1872 & 11.188 \\
100 / 256 & 128 & 0.3633 & 0.202 & 0.2694 & 0.718 & 0.4637 & 0.184 & 0.3581 & 11.480 \\
100 / 512 & 128 & 0.2204 & 0.283 & 0.1706 & 1.105 & 0.3934 & 0.262 & 0.2461 & 21.496 \\
100 / 1024 & 128 & 0.1567 & 0.480 & 0.1120 & 1.866 & 0.3577 & 0.418 & 0.1830 & 41.882 \\
Overall & 1152 & 0.1950 & 0.156 & 0.1716 & 0.595 & 0.3552 & 0.158 & 0.2540 & 10.967
\\
\bottomrule
\end{tabular*}
\caption{Complete synthetic evaluation: mean normalized RF and runtime on
128 held-out alignments per setting, using the generating trees as references.
Overall pools all 1,152 alignments, equally weighting the nine settings.}
\label{tab:phylo-synthetic-full}
\end{table}

\begin{table}[tbp]
\centering
\small
\setlength{\tabcolsep}{3pt}
\begin{tabular*}{\linewidth}{@{\extracolsep{\fill}}lrrrrrrrrr@{}}
\toprule
& & \multicolumn{2}{c}{\nnlinkage{}} & \multicolumn{2}{c}{NeuralNJ}
& \multicolumn{2}{c}{BIONJ} & \multicolumn{2}{c}{RAxML-NG} \\
\cmidrule(lr){3-4}\cmidrule(lr){5-6}\cmidrule(lr){7-8}\cmidrule(lr){9-10}
Dataset & Trees & RF & Time (s) & RF & Time (s) & RF & Time (s) & RF & Time (s) \\
\midrule
JarvD5a & 250 & 0.7223 & 0.413 & 0.7010 & 2.016 & 0.6750 & 0.229 & 0.4964 & 22.133 \\
SongD1 & 250 & 0.4002 & 0.451 & 0.4411 & 2.651 & 0.4468 & 0.215 & 0.1844 & 15.947 \\
TarvD7 & 250 & 0.4702 & 0.237 & 0.5202 & 1.279 & 0.4867 & 0.155 & 0.2358 & 8.733 \\
WickD3b & 250 & 0.5440 & 0.248 & 0.5109 & 1.098 & 0.5444 & 0.209 & 0.1957 & 18.328 \\
Overall & 1000 & 0.5342 & 0.337 & 0.5433 & 1.761 & 0.5382 & 0.202 & 0.2781 & 16.285
\\
\bottomrule
\end{tabular*}
\caption{Four-method empirical comparison: mean normalized RF and runtime on
250 sampled alignments per dataset, using per-locus best-observed trees as
references. Overall pools all 1,000 alignments, equally weighting the four
datasets.}
\label{tab:phylo-real-full}
\end{table}

\FloatBarrier
\flushbottom
\section{An Approximation Ratio Lower Bound for CL}
\label[appendix]{app:complete-linkage-counterexample}
In this section we give a point set for which CL can yield a tree with diameter sum cost 25\% more than optimal. We then test our trained model on this instance, showing it recovers the optimal tree.

\FloatBarrier
\subsection{Approximation Ratio Lower Bound}

Consider four points $X=\{A,B,C,D\}\subset[0,1]^2$ where
\[
A=(0,0.25),\quad B=(0,0.75),\quad C=(0.25,0),\quad D=(0.25,1).
\]
Let $d$ be the $L_1$ distance in the plane.

\begin{lemma}
Let $\mathrm{OPT}(X)$ be the diameter sum of the optimal tree over $X$.
On $X$, CL can produce a tree with diameter sum
\begin{equation}
\frac{5}{4}\mathrm{OPT}(X).
\end{equation}
\end{lemma}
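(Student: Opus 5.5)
The plan is to compute everything explicitly, since $X$ has only four points. We first tabulate the six pairwise $L_1$ distances:
\[
d(A,B)=d(A,C)=d(B,D)=\tfrac12,\qquad d(A,D)=d(B,C)=d(C,D)=1 .
\]
Singleton clusters have diameter $0$, and the root always contributes $\operatorname{diam}(X)=1$. So the cost of a tree is $1$ plus the diameters of its two non-root internal clusters.

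\textbf{Step 1: compute $\mathrm{OPT}(X)$.} We enumerate the two tree shapes on four leaves.
\begin{itemize}
\item \emph{Balanced tree} $((x,y),(z,w))$. The cost is $1$ plus the diameters of the two pairs. The three pairings give
\[
\{AB,CD\}\colon\ 2.5,\qquad \{AC,BD\}\colon\ 2,\qquad \{AD,BC\}\colon\ 3 .
\]
\item \emph{Caterpillar}. The cost is $1$ plus the diameter of the first pair plus the diameter of the three-point cluster. Every triple contains one of the distance-$1$ pairs $AD$, $BC$, $CD$, so every triple has diameter $1$. Each pair has diameter at least $\tfrac12$, so every caterpillar costs at least $2.5$.
\end{itemize}
Hence $\mathrm{OPT}(X)=2$, attained by $((A,C),(B,D))$.

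\textbf{Step 2: exhibit a CL run of cost $2.5$.} The minimum distance $\tfrac12$ is attained by three pairs, $AB$, $AC$ and $BD$, so CL faces a tie at its first merge. Under a valid tie-breaking rule it may merge $A$ and $B$. The remaining CL distances are then
\[
d_{\mathrm{CL}}(\{A,B\},C)=\max\{\tfrac12,1\}=1,\qquad d_{\mathrm{CL}}(\{A,B\},D)=\max\{1,\tfrac12\}=1,\qquad d(C,D)=1 .
\]
These all tie again, and every continuation costs $2.5$ by Step 1:
\begin{itemize}
\item merging $C$ and $D$ gives the balanced tree with pairing $\{AB,CD\}$, of cost $2.5$;
\item absorbing $C$ or $D$ into $\{A,B\}$ gives a caterpillar with first pair $AB$, of cost $\tfrac12+1+1=2.5$.
\end{itemize}
Therefore this CL run produces cost $2.5=\tfrac54\,\mathrm{OPT}(X)$.

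The only delicate point is interpretive. The statement says CL \emph{can} produce this tree, so it relies on adversarial tie-breaking at the first merge; with a favorable tie-break, merging $A,C$ or $B,D$, CL could reach the optimum. If a strict example were wanted, we would perturb the points slightly so that $AB$ becomes the unique closest pair. The ratio would then be $\tfrac54$ only in the limit, so we would stay with the tie-based reading.
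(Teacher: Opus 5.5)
Your proof is correct and follows essentially the same route as the paper: both show $\mathrm{OPT}(X)=2$, attained by $(AC)(BD)$, and exhibit the CL run that breaks the first tie by merging $A,B$ and then pays $1/2+1+1=5/2$. The paper gets $\mathrm{OPT}(X)\ge 2$ in one line, without enumerating tree shapes: the root has diameter $1$ and every other internal node has diameter at least the minimum pairwise distance $1/2$. It also remarks that no CL tie-break costs more than $5/2$, which the statement does not require.
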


\begin{proof}
Every binary tree on $X$ has three internal nodes. The root has diameter $1$,
and each other internal node has diameter at least $1/2$. Thus every tree costs
at least $2$, attained by $(AC)(BD)$.

CL can first merge $A,B$, since their distance $1/2$ is tied for
the minimum. All remaining CL distances are then $1$, giving
total cost $1/2+1+1=5/2$. No CL output costs more, since its first
merge has diameter $1/2$ and each subsequent cluster has diameter at most $1$.
Hence the worst-case ratio on this instance, over tie-breaking choices, is
$(5/2)/2=5/4$.
\end{proof}

\begin{figure}[H]
\centering
\tikzset{
  example leaf/.style={circle,fill=black,inner sep=0pt,minimum size=4pt},
  example merge/.style={circle,fill=white,inner sep=0pt,minimum size=4pt,line width=0.8pt}
}
\begin{minipage}[t]{0.48\linewidth}
\centering
\textbf{(a) Optimal}\\
\nnlinkage{}
\par\smallskip
\begin{tikzpicture}[x=3cm,y=3cm,line width=1pt]
\coordinate (A) at (0,0.25);
\coordinate (B) at (0,0.75);
\coordinate (C) at (0.25,0);
\coordinate (D) at (0.25,1);
\coordinate (AC) at (0.62,0.17);
\coordinate (BD) at (0.62,0.83);
\coordinate (R) at (1.12,0.5);
\draw[teal!75!black] (A)--(AC)--(C) (B)--(BD)--(D) (AC)--(R)--(BD);
\node[example leaf,label=left:$A$] at (A) {};
\node[example leaf,label=left:$B$] at (B) {};
\node[example leaf,label=below:$C$] at (C) {};
\node[example leaf,label=above:$D$] at (D) {};
\node[example merge,draw=teal!75!black,label=below:$0.5$] at (AC) {};
\node[example merge,draw=teal!75!black,label=above:$0.5$] at (BD) {};
\node[example merge,draw=teal!75!black,label=right:$1$] at (R) {};
\end{tikzpicture}
\par
Diameter sum $0.5+0.5+1=2$
\end{minipage}
\hfill
\begin{minipage}[t]{0.48\linewidth}
\centering
\textbf{(b) CL}\\
$A,B$ first
\par\smallskip
\begin{tikzpicture}[x=3cm,y=3cm,line width=1pt]
\coordinate (A) at (0,0.25);
\coordinate (B) at (0,0.75);
\coordinate (C) at (0.25,0);
\coordinate (D) at (0.25,1);
\coordinate (AB) at (0.48,0.5);
\coordinate (ABC) at (0.83,0.18);
\coordinate (R) at (1.18,0.65);
\draw[red!65!black] (A)--(AB)--(B) (AB)--(ABC)--(C) (ABC)--(R)--(D);
\node[example leaf,label=left:$A$] at (A) {};
\node[example leaf,label=left:$B$] at (B) {};
\node[example leaf,label=below:$C$] at (C) {};
\node[example leaf,label=above:$D$] at (D) {};
\node[example merge,draw=red!65!black,label=above:$0.5$] at (AB) {};
\node[example merge,draw=red!65!black,label=below:$1$] at (ABC) {};
\node[example merge,draw=red!65!black,label=right:$1$] at (R) {};
\end{tikzpicture}
\par
Diameter sum $0.5+1+1=2.5$
\end{minipage}
\caption{The optimal tree $(AC)(BD)$, also returned by \nnlinkage{}, and a possible
CL tree $((AB)C)D$ on the original point set. Filled nodes retain
the point locations. Hollow nodes show merges and are labeled by cluster
diameter. Branches are schematic, not routing wires.}
\label{fig:complete-linkage-counterexample}
\end{figure}

\FloatBarrier
\subsection{\nnlinkage{} on This Point Set}
Testing \nnlinkage{}, trained on the $n=20$ uniform instances as before, we see
that it outputs the optimal tree $(AC)(BD)$.
\cref{fig:complete-linkage-counterexample} compares this tree with a
possible CL output.
\FloatBarrier
\subsection{Robustness Experiments}

We can force CL to make a non-optimal choice by decreasing point
$B$'s $y$-coordinate by $0.001$, making it closer to $A$. With $B=(0,0.749)$
and $A,C,D$ unchanged, $d(A,B)=0.499$ is the unique minimum pairwise distance,
so CL must merge $A,B$ first. It then returns $((AB)C)D$ with
cost $2.498$. The optimal tree is still $(AC)(BD)$, now with cost $2.001$,
and \nnlinkage{} still returns this tree despite the perturbation.
The costs of all 15 trees are given in
\cref{app:counterexample-computations}.

We can test how robust our trained model is compared to CL by
gradually moving points $A$ and $B$ closer together, instead of perturbing just
one point. We keep $C=(0.25,0)$ and $D=(0.25,1)$ fixed, and move $A$ and $B$
toward each other along the $y$-axis while keeping them centered around $y=0.5$.
Writing their distance as $s$, we have $A=(0,0.5-s/2)$ and $B=(0,0.5+s/2)$.
At the start, $s=0.5$, which gives the original instance.

At each tested value of $s$, we computed the optimal tree by checking all 15
trees, ran CL and \nnlinkage{}, and recorded their first merges
and costs. The ranges in the table summarize tested values only.
At $s=0.500$, the CL result depends on tie-breaking.

\begin{center}
\begin{tabular}{@{}ccccc@{}}
\shortstack{Tested\\$s$} & \shortstack{Optimal\\first merge} & \shortstack{CL\\first merge} & \shortstack{\nnlinkage{}\\first merge} & \shortstack{\nnlinkage{} cost /\\optimal cost} \\
\hline
$>0.500$ & $A,C$ or $B,D$ & $A,C$ or $B,D$ & $A,C$ or $B,D$ & $1.000$ \\
$0.500$ & $A,C$ or $B,D$ & $A,B$ & $A,C$ or $B,D$ & $1.000$ \\
$[0.445,0.500)$ & $A,C$ or $B,D$ & $A,B$ & $A,C$ or $B,D$ & $1.000$ \\
$0.440$ & $A,C$ or $B,D$ & $A,B$ & $A,B$ & $1.170$ \\
$0.305$ & $A,C$ or $B,D$ & $A,B$ & $A,B$ & $1.006$ \\
$0.300$ & $A,B$, $A,C$, or $B,D$ & $A,B$ & $A,B$ & $1.000$ \\
$<0.300$ & $A,B$ & $A,B$ & $A,B$ & $1.000$ \\
\end{tabular}
\end{center}

At $s=0.300$, $(AC)(BD)$, $((AB)C)D$, and $((AB)D)C$ are all optimal.
\nnlinkage{} preserves the optimal tree over a larger range of tested
perturbations than CL. However, it already returns a suboptimal
tree at $s=0.440$, before the optimal tree changes at $s=0.300$.

\FloatBarrier
\subsection{Supplementary Optimal Computations}
\label[appendix]{app:counterexample-computations}

\paragraph{Single-Point Perturbation.}
Here $B=(0,0.749)$, with $A,C,D$ unchanged from the original instance.

For the root cost, we can compute the distance from these pairs:
\[
\begin{aligned}
d(A,B)&=0.4990,\quad d(A,C)=0.5000,\quad d(B,D)=0.5010,\\
d(A,D)&=d(C,D)=1.0000,\quad d(B,C)=0.9990.
\end{aligned}
\]
These distances show that the diameter sum cost of the root is still $1.0$.

For the balanced trees,
\begin{center}
\begin{tabular}{lccc|c}
Tree & Node 1 & Node 2 & Root & Total \\
\hline
$(AC)(BD)$ & $d(A,C)=0.5000$ & $d(B,D)=0.5010$ & $1.0000$ & $\mathbf{2.0010}$ \\
$(AB)(CD)$ & $d(A,B)=0.4990$ & $d(C,D)=1.0000$ & $1.0000$ & $\mathbf{2.4990}$ \\
$(AD)(BC)$ & $d(A,D)=1.0000$ & $d(B,C)=0.9990$ & $1.0000$ & $\mathbf{2.9990}$ \\
\end{tabular}
\end{center}
For the unbalanced trees,
\begin{center}
\begin{tabular}{lccc|c}
Tree & Inner pair & Middle diam. & Root & Total \\
\hline
$((AB)C)D$ & $d(A,B)=0.4990$ & $\mathrm{diam}(\{A,B,C\})=0.9990$ & $1.0000$ & $\mathbf{2.4980}$ \\
$((AB)D)C$ & $d(A,B)=0.4990$ & $\mathrm{diam}(\{A,B,D\})=1.0000$ & $1.0000$ & $\mathbf{2.4990}$ \\
$((AC)B)D$ & $d(A,C)=0.5000$ & $\mathrm{diam}(\{A,C,B\})=0.9990$ & $1.0000$ & $\mathbf{2.4990}$ \\
$((AC)D)B$ & $d(A,C)=0.5000$ & $\mathrm{diam}(\{A,C,D\})=1.0000$ & $1.0000$ & $\mathbf{2.5000}$ \\
$((AD)B)C$ & $d(A,D)=1.0000$ & $\mathrm{diam}(\{A,D,B\})=1.0000$ & $1.0000$ & $\mathbf{3.0000}$ \\
$((AD)C)B$ & $d(A,D)=1.0000$ & $\mathrm{diam}(\{A,D,C\})=1.0000$ & $1.0000$ & $\mathbf{3.0000}$ \\
$((BC)A)D$ & $d(B,C)=0.9990$ & $\mathrm{diam}(\{B,C,A\})=0.9990$ & $1.0000$ & $\mathbf{2.9980}$ \\
$((BC)D)A$ & $d(B,C)=0.9990$ & $\mathrm{diam}(\{B,C,D\})=1.0000$ & $1.0000$ & $\mathbf{2.9990}$ \\
$((BD)A)C$ & $d(B,D)=0.5010$ & $\mathrm{diam}(\{B,D,A\})=1.0000$ & $1.0000$ & $\mathbf{2.5010}$ \\
$((BD)C)A$ & $d(B,D)=0.5010$ & $\mathrm{diam}(\{B,D,C\})=1.0000$ & $1.0000$ & $\mathbf{2.5010}$ \\
$((CD)A)B$ & $d(C,D)=1.0000$ & $\mathrm{diam}(\{C,D,A\})=1.0000$ & $1.0000$ & $\mathbf{3.0000}$ \\
$((CD)B)A$ & $d(C,D)=1.0000$ & $\mathrm{diam}(\{C,D,B\})=1.0000$ & $1.0000$ & $\mathbf{3.0000}$ \\
\end{tabular}
\end{center}
The minimum cost is $2.0010$, achieved by $(AC)(BD)$.

\paragraph{Symmetric Perturbations.}
 
$A=(0,0.28)$, $B=(0,0.72)$, $C=(0.25,0)$, $D=(0.25,1)$.
\[
\begin{aligned}
d(A,B)&=0.4400,\quad d(A,C)=d(B,D)=0.5300,\\
d(A,D)&=d(B,C)=0.9700,\quad d(C,D)=\mathrm{diam}(X)=1.0000.
\end{aligned}
\]
 
\begin{center}
\begin{tabular}{lccc|c}
Tree & Component 1 & Component 2 & Root & Total \\
\hline
$(AC)(BD)$ & $0.5300$ & $0.5300$ & $1.0000$ & $\mathbf{2.0600}$ \\
$(AB)(CD)$ & $0.4400$ & $1.0000$ & $1.0000$ & $\mathbf{2.4400}$ \\
$(AD)(BC)$ & $0.9700$ & $0.9700$ & $1.0000$ & $\mathbf{2.9400}$ \\
$((AB)C)D$ & $0.4400$ & $0.9700$ & $1.0000$ & $\mathbf{2.4100}$ \\
$((AB)D)C$ & $0.4400$ & $0.9700$ & $1.0000$ & $\mathbf{2.4100}$ \\
$((AC)B)D$ & $0.5300$ & $0.9700$ & $1.0000$ & $\mathbf{2.5000}$ \\
$((AC)D)B$ & $0.5300$ & $1.0000$ & $1.0000$ & $\mathbf{2.5300}$ \\
$((AD)B)C$ & $0.9700$ & $0.9700$ & $1.0000$ & $\mathbf{2.9400}$ \\
$((AD)C)B$ & $0.9700$ & $1.0000$ & $1.0000$ & $\mathbf{2.9700}$ \\
$((BC)A)D$ & $0.9700$ & $0.9700$ & $1.0000$ & $\mathbf{2.9400}$ \\
$((BC)D)A$ & $0.9700$ & $1.0000$ & $1.0000$ & $\mathbf{2.9700}$ \\
$((BD)A)C$ & $0.5300$ & $0.9700$ & $1.0000$ & $\mathbf{2.5000}$ \\
$((BD)C)A$ & $0.5300$ & $1.0000$ & $1.0000$ & $\mathbf{2.5300}$ \\
$((CD)A)B$ & $1.0000$ & $1.0000$ & $1.0000$ & $\mathbf{3.0000}$ \\
$((CD)B)A$ & $1.0000$ & $1.0000$ & $1.0000$ & $\mathbf{3.0000}$ \\
\end{tabular}
\end{center}
 
The minimum is $2.0600$, achieved uniquely by $(AC)(BD)$. The optimal tree is unchanged from the original instance, but the model's output at this $s$ matches $((AB)C)D$ instead, costing $2.4100$, matching CL's suboptimal trajectory.

The optimal tree shifts when the distance between points $A$ and $B$ becomes $0.300$.

$A=(0,0.35)$, $B=(0,0.65)$, $C=(0.25,0)$, $D=(0.25,1)$.
\[
\begin{aligned}
d(A,B)&=0.3000,\quad d(A,C)=d(B,D)=0.6000,\\
d(A,D)&=d(B,C)=0.9000,\quad d(C,D)=\mathrm{diam}(X)=1.0000.
\end{aligned}
\]
 
\begin{center}
\begin{tabular}{lccc|c}
Tree & Component 1 & Component 2 & Root & Total \\
\hline
$(AC)(BD)$ & $0.6000$ & $0.6000$ & $1.0000$ & $\mathbf{2.2000}$ \\
$(AB)(CD)$ & $0.3000$ & $1.0000$ & $1.0000$ & $\mathbf{2.3000}$ \\
$(AD)(BC)$ & $0.9000$ & $0.9000$ & $1.0000$ & $\mathbf{2.8000}$ \\
$((AB)C)D$ & $0.3000$ & $0.9000$ & $1.0000$ & $\mathbf{2.2000}$ \\
$((AB)D)C$ & $0.3000$ & $0.9000$ & $1.0000$ & $\mathbf{2.2000}$ \\
$((AC)B)D$ & $0.6000$ & $0.9000$ & $1.0000$ & $\mathbf{2.5000}$ \\
$((AC)D)B$ & $0.6000$ & $1.0000$ & $1.0000$ & $\mathbf{2.6000}$ \\
$((AD)B)C$ & $0.9000$ & $0.9000$ & $1.0000$ & $\mathbf{2.8000}$ \\
$((AD)C)B$ & $0.9000$ & $1.0000$ & $1.0000$ & $\mathbf{2.9000}$ \\
$((BC)A)D$ & $0.9000$ & $0.9000$ & $1.0000$ & $\mathbf{2.8000}$ \\
$((BC)D)A$ & $0.9000$ & $1.0000$ & $1.0000$ & $\mathbf{2.9000}$ \\
$((BD)A)C$ & $0.6000$ & $0.9000$ & $1.0000$ & $\mathbf{2.5000}$ \\
$((BD)C)A$ & $0.6000$ & $1.0000$ & $1.0000$ & $\mathbf{2.6000}$ \\
$((CD)A)B$ & $1.0000$ & $1.0000$ & $1.0000$ & $\mathbf{3.0000}$ \\
$((CD)B)A$ & $1.0000$ & $1.0000$ & $1.0000$ & $\mathbf{3.0000}$ \\
\end{tabular}
\end{center}
 
Here $(AC)(BD)$, $((AB)C)D$, and $((AB)D)C$ all tie at $2.2000$. This is the exact point where the true optimal tree stops being uniquely $(AC)(BD)$, and the model's and CL's costs equal that of the optimal tree.

\end{document}